\definecolor{LightCyan}{rgb}{0.88,1,1}
\definecolor{LightPurple}{rgb}{0.88,0.88,1}
\definecolor{LightGray}{rgb}{0.93, 0.93, 0.94}
\newcolumntype{x}{>{\columncolor{LightCyan}} c}
\newcolumntype{y}{>{\columncolor{LightPurple}} c}
\newcolumntype{z}{>{\columncolor{LightGray}} c}
\newcommand{\norm}[1]{\left\lVert#1\right\rVert}
\newcommand{\AMatrix}{\mathbf{A}}  
\newcommand{\IMatrix}{\mathbf{I}}
\newcommand{\DMatrix}{\mathbf{D}}
\newcommand{\LMatrix}{\mathbf{L}}
\newcommand{\UMatrix}{\mathbf{U}}
\newcommand{\VMatrix}{\mathbf{V}}
\newcommand{\PMatrix}{\mathbf{P}}
\newcommand{\HMatrix}{\mathbf{H}}
\newcommand{\XMatrix}{\mathbf{X}}
\newcommand{\OmegaMatrix}{\mathbf{\Omega}}
\newcommand{\YMatrix}{\mathbf{Y}}
\newcommand{\ZMatrix}{\mathbf{Z}}
\newcommand{\QMatrix}{\mathbf{Q}}
\newcommand{\RMatrix}{\mathbf{R}}
\newcommand{\TMatrix}{\mathbf{T}}
\newcommand{\EMatrix}{\mathbf{E}}
\newcommand{\AlmostZero}{\mathbf{\widetilde{O}}}
\newcommand{\identityvec}{\mathbf{e}}
\newcommand{\qvec}{\mathbf{q}}
\newcommand{\ortvec}{\mathbf{v}}
\newcommand{\algoArnoldi}{{\sc Arnoldi-GCN}}
\newcommand{\algoVandermonde}{{\sc Vandermonde-GCN}}
\newcommand{\algoGArnoldi}{{\sc G-Arnoldi-GCN}}
\newcommand{\nodeset}{\mathcal{V}}
\newcommand{\edgeset}{\mathcal{E}}
\newcommand{\network}{\mathcal{G}}
\newcommand{\krylov}{\mathcal{K}}
\newtheorem{proof}{Proof}%
\newtheorem{theorem}{Theorem}
    \NewDocumentEnvironment{#1matrix+}{O{c}m}
     {
      \left\str_case:nnF { #1 } {{b}{[}{B}{\{}{p}{(}{v}{|}{V}{\|}}{.}
      \ben_matrix_preamble:nn { ##1 } { ##2 }
     }
     {
      \exp_not:N \endarray
      \right\str_case:nnF { #1 } {{b}{]}{B}{\}}{p}{)}{v}{|}{V}{\|}}{.}
     }
\newtheorem{definition}{Definition}
\def\BibTeX{{\rm B\kern-.05em{\sc i\kern-.025em b}\kern-.08em
    T\kern-.1667em\lower.7ex\hbox{E}\kern-.125emX}}
\title{Generalized Learning of Coefficients in Spectral Graph Convolutional Networks}
\author{
 Mustafa Coskun \\
  Artificial Intelligence Department\\
  Ankara University\\
  Ankara, 06033, TURKEY \\
  \texttt{coskunmustafa@ankara.edu.tr} \\
   \And
 Ananth Grama \\
  Department of Computer Science\\
  Purdue University\\
  West Lafayette, IN 47906, USA \\
  \texttt{ayg@cs.purdue.edu} \\
  \And
 Mehmet Koyut\"urk \\
  Department ofComputer and Data Sciences\\
  Case Western Reserve University\\
  Cleveland, OH 44106, USA \\
  \texttt{mehmet.koyuturk@case.edu} \\
}
\begin{document}
\maketitle
\begin{abstract}
 Spectral Graph Convolutional Networks (GCNs) have gained popularity in graph machine learning applications due, in part, to their flexibility in specification of network propagation rules.
These propagation rules are often constructed as polynomial filters
whose coefficients are learned using label information during training.
In contrast to learned polynomial filters, explicit filter functions are useful in capturing relationships between network topology and distribution of labels across the network. 
A number of algorithms incorporating either approach have been proposed; however the relationship between filter functions and polynomial approximations is not fully resolved.
This is largely due to the ill-conditioned nature of the linear systems that must be solved to derive polynomial approximations of filter functions. To address this challenge, we propose a novel Arnoldi orthonormalization-based algorithm, along with a unifying approach, called \algoGArnoldi{} that can efficiently and effectively approximate a given filter function with a polynomial.  
We evaluate \algoGArnoldi{} in the context of multi-class node classification across ten datasets with diverse topological characteristics. 
Our experiments show that \algoGArnoldi{} consistently outperforms state-of-the-art methods when suitable filter functions are employed. Overall, \algoGArnoldi{} opens important new directions in graph machine learning by enabling the explicit design and application of diverse filter functions. Code link: https://github.com/mustafaCoskunAgu/GArnoldi-GCN
\end{abstract}


\section{Introduction}
\label{sec:intro}

Spectral Graph Convolutional Networks (Spectral-GCNs) have attracted significant attention in various graph representation learning tasks, including node classification~\cite{he2021bernnet}, link prediction~\cite{schlichtkrull2018modeling,cocskun2021node,cocskun2021fast}, and applications like drug discovery~\cite{rathi2019practical, jiang2021could}. Spectral-GCNs utilize spectral convolution or spectral graph filters, operating in the spectral domain of the graph Laplacian matrix or normalized adjacency matrix~\cite{chien2020adaptive, he2021bernnet} to address the "feature-over-smoothing" problem by isolating the propagation scheme from neural networks. 
Spectral GCNs enable distribution of label/ feature information beyond
neighboring nodes, as in Spatial GCNs~\cite{kipf2016semi}, to distant nodes~\cite{gasteiger2018predict}. A key element of 
Spectral GCNs is spectral graph convolutions or filters, which control the distribution of processed label/feature information. These convolutions assign weights to contributions from each hop in the graph, and the challenge in Spectral GCNs revolves around finding optimal weights.

 Spectral GCNs are grouped into two major categories: predetermined spectral graph convolution/filters~\cite{gasteiger2018predict,zhu2020simple, zhu2021interpreting} and learnable spectral graph convolution/filters, such as ChebNet~\cite{defferrard2016convolutional}, CayleyNet~\cite{levie2018cayleynets}, GPR-GNN~\cite{chien2020adaptive}, BernNet~\cite{he2021bernnet}, JacobiCon~\cite{wang2022powerful}. Applications in the former class rely on domain/ data knowledge to assign polynomial coefficients to design custom filters, whereas those in the latter class learn coefficients using training labels.
%
%
%
Despite the effectiveness of current Spectral GCNs, their methodologies often come with inherent challenges. Specifically, these models either depend on predetermined coefficients, embodying an implicit filter  (e.g., APPNP~\cite{gasteiger2018predict}, GNN-LP/HP~\cite{zhu2021interpreting}, etc.), or utilize a specific polynomial and its update rules without establishing a connection with any filter function~\cite{defferrard2016convolutional, levie2018cayleynets,he2021bernnet,wang2022powerful}.
 In such cases, models rely on neural networks to learn an arbitrary filter based on label/feature information and polynomials' update rules. However, the lack of a principled link between a filter and the selected polynomial introduces uncertainties regarding the assertion of effectively learning an arbitrary filter.

Our focus in this paper is on the design of polynomials as proxies for implementing explicitly specified filter functions. These polynomials specify the propagation rules, enabling interpretation and effective design of filter functions. Our goal is to establish the mathematical foundations to enable their joint exploration for design of effective and computationally efficient filters. We achieve this goal through the following major contributions:

\begin{enumerate}
    \item We approximate a given filter to a polynomial by formulating and solving the corresponding Vandermonde linear system. We show that the exponential condition number of the Vandermonde matrix leads to the computation of invalid polynomial coefficients, resulting in poor polynomial approximations.
    \item We demonstrate that an alternate QR decomposition for the  Vandermonde matrix can be computed through an Arnoldi-process. This approach produces precise polynomial approximations for any filter.
    \item Using polynomial approximations obtained via Arnoldi, we introduce the \algoArnoldi{} algorithm, eliminating the need for polynomial update rules in current methods. This offers a powerful new propagation scheme independent of the polynomial.
    \item Finally, we extend the \algoArnoldi\ approach to \algoGArnoldi\ by refining polynomial coefficients computed by \algoArnoldi\ within neural networks using label information, thereby broadening its applicability within the context of neural networks.
\end{enumerate}

We provide comprehensive theoretical underpinnings for our findings and empirically demonstrate the efficiency of our approaches across diverse real-world benchmark datasets and problems. First, we approximate eight representative filter functions, including simple random walks and intricate band-pass rejection filters, using Equispaced, Chebyshev, Legendre, and Jacobi polynomial samples. 
We then assess node classification performance of Spectral GCNs that are built using these approximations.
Our results show that our Arnoldi-based approximation consistently outperforms the direct solution of the Vandermonde system in generating precise polynomial approximations across all filter functions. 
Rigorous experiments on 15 benchmark datasets, covering semi-supervised and fully supervised learning tasks, reveal that our algorithms significantly advance state-of-the-art methods in node classification performance.



\begin{figure*}[ht]
	\centering
	\scalebox{0.45}{\input{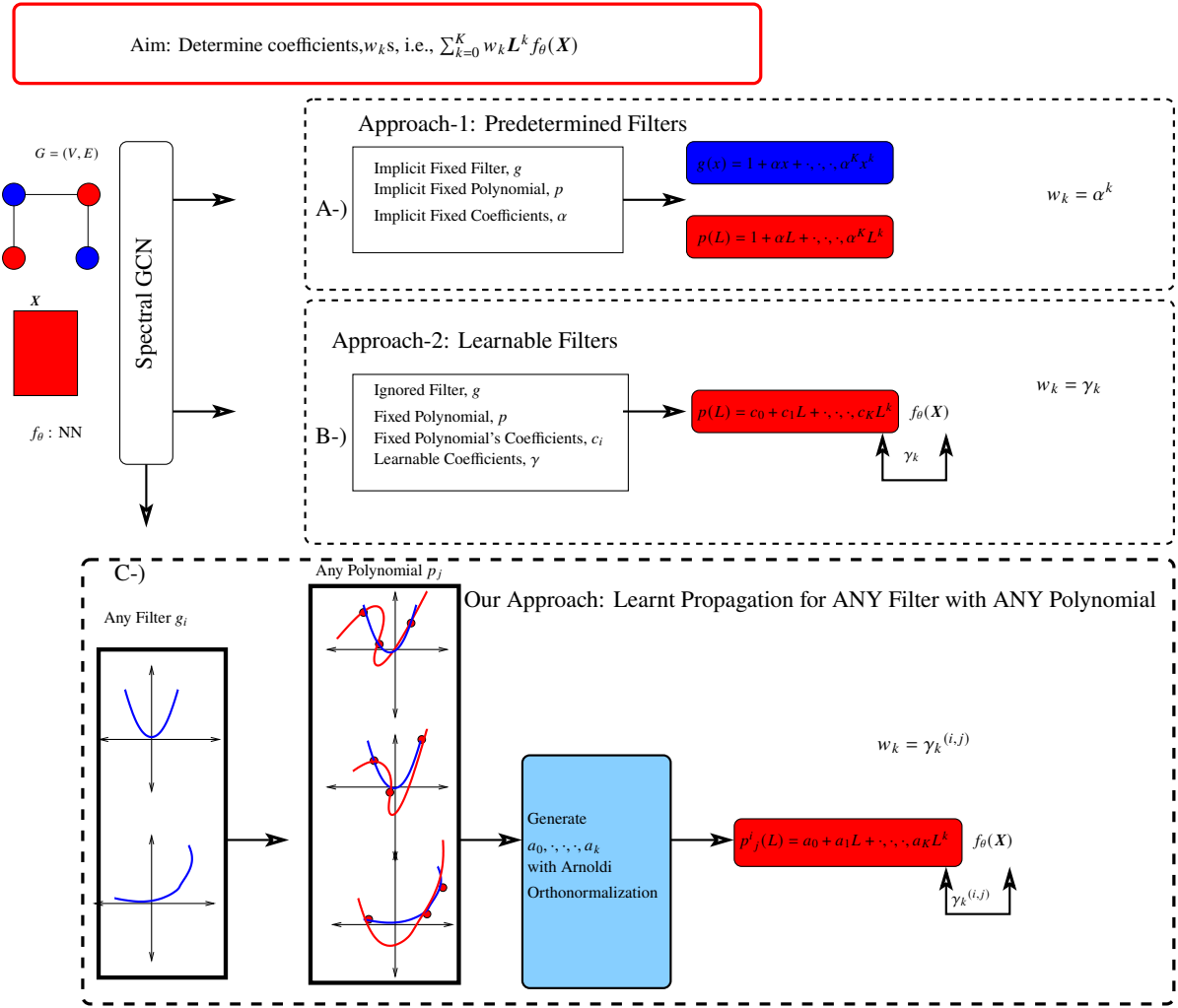}}
	\caption{Workflow of proposed Arnoldi-Based Generalized Spectral GCN. }  
\label{fig:MainFig}
\end{figure*}
\section{Background and Related Work}
\label{sec:background}

\textbf{Spectral Graph Convolutional Networks.} We denote an undirected graph by $\network = (\nodeset, \edgeset)$ with node set $\nodeset$ and edge set $\edgeset$, where the number of nodes is $|\nodeset| = n$. 
In the context of node classification, we are given an $n\times m$ feature matrix $\XMatrix$, where $m$ denotes the number of features and $\XMatrix(i,j)$ represents the value of the $j$th feature for the $i$th node. 
We are also given an $n \times c$-dimensional label matrix $\YMatrix$, where $c$ denotes the number of classes, and $\YMatrix(i,j)$ indicates whether node $i$ belongs to class $j$.
Our goal is to learn a neural network model $\YMatrix = f(\network; \XMatrix)$, where $f$ utilizes topological relationships between nodes in $\network$ to
enhance the generalizibility of the machine learning 
model. 

Spectral GCNs use spectral filter functions to propagate signals across the graph. 
The signals include input features and latent features that are computed at intermediate layers of the neural network.
For the sake of generality, in the following discussion, we denote quantities propagated across the network as graph signal $\boldsymbol{x} \in \mathbb{R}^n$, where $\boldsymbol{x}(i)$ denotes the graph signal at node~$i$.

\textbf{Spectral Filter Functions.} Let $\AMatrix$ denote the adjacency matrix of  graph $\network$ and $\DMatrix$ denote the diagonal degree matrix. 
We represent the degree-normalized adjacency matrix as $\PMatrix = \DMatrix^{-1/2} \AMatrix \DMatrix^{-1/2}$ and graph Laplacian matrix as  $\LMatrix = \IMatrix - \PMatrix$, where $\IMatrix$ denotes the identity matrix. 
Let  $\LMatrix = \UMatrix \boldsymbol{\Lambda} \UMatrix^T$ be the eigendecompostion of $\LMatrix$, where $\boldsymbol{\Lambda} = \text{diag}[\lambda_1, \cdot, \cdot, \cdot, \lambda_n]$. 

Spectral GCNs create the spectral graph convolutions, or spectral filters, in the domain of the graph Laplacian~\cite{chien2020adaptive}. Namely:
\begin{equation}
\boldsymbol{y} = 
\UMatrix g(\boldsymbol{\Lambda}) \UMatrix^T \boldsymbol{x},
    \label{eq:filter}
\end{equation}
where $g(\boldsymbol{\Lambda}) = \text{diag}[g(\lambda_1), \cdot, \cdot, \cdot, g(\lambda_n)]$.
Here, $\boldsymbol{y}$ represents the vector obtained by filtering graph signal $\boldsymbol{x}$. The function $g(\lambda)$, called a \emph{spectral filter}, transforms the eigenvalues of the Laplacian matrix to suitably shape signal propagation over the graph. Computing the eigendecomposition in Equation~\ref{eq:filter} is computationally expensive for large matrices, motivating the use of \emph{polynomials} for approximation:

\begin{equation}
\begin{aligned}
& \boldsymbol{y}  = \UMatrix g(\boldsymbol{\Lambda}) \UMatrix^T \boldsymbol{x} 
\approx \sum_{k=0}^K w_k \LMatrix^k \boldsymbol{x} 	\equiv \sum_{k=0}^K c_k \PMatrix^k \boldsymbol{x},
\end{aligned}
    \label{eq:filterPoly}
\end{equation}


where $K$ is a hyperparameter that specifies the degree of the polynomial used in the approximation.
An important design criterion in Spectral GCNs is the choice of coefficients $w_k$ or $c_k$.
In the context of random walks, $K$ specifies the extent of propagation in terms of path length and coefficients $w_k$ or $c_k$ correspond to the relative weight of paths of length $k$.

\subsection{General Formulation of Spectral GCNs} 
Given a filtering function ($g(\lambda)$) or a polynomial approximation (Equation~\ref{eq:filterPoly}), the general setting for Spectral GCNs is as follows:
\begin{equation}
\begin{aligned}
\YMatrix = &~\text{softmax}\big( \ZMatrix\big), \ZMatrix = \sum_{k = 0}^K w_k \HMatrix^ {(k)},  \quad\\
\HMatrix^ {(k)} &= \LMatrix
\HMatrix^ {(k-1)}, \HMatrix^ {(0)} = f_{\theta} (\XMatrix)
\end{aligned}
    \label{eq:SpectralGCN}
\end{equation}
where $\XMatrix \in \mathbb{R}^{n\times m}$ denotes the feature matrix, $f_{\theta}$ denotes neural network with parameters $\theta$, 
and $\YMatrix$ denotes the label matrix.

Current spectral GCNs
can be grouped into two categories based on how they choose the coefficients of the polynomials: Predetermined Graph Convolution and Learnable Graph Convolution. 

\subsubsection{Predetermined Graph Convolution}
This class of spectral GCNs fix the weights a-priori, e.g., $c_k = \alpha^k$ where $\alpha \in (0,1)$. 
For example, \textbf{APPNP~\cite{gasteiger2018predict}} uses an implicit filter function,  $g: [-\alpha, \alpha] \rightarrow \mathbb{R}$ with $g (\omega) = \frac{1-\alpha}{(1- \omega)}$, which is approximated by the polynomial $p(\omega) = 1 + \alpha \omega + \alpha^2 \omega^2 + ,\cdot, \cdot, \cdot, $.
Then, starting with $\HMatrix^ {(0)} = f_{\theta} (\XMatrix)$, APPNP defines the following Spectral-GCN: 
\begin{equation}
\begin{aligned}
    &\ZMatrix_{\text{APPNP}} = \sum_{k = 0}^K \alpha^k \HMatrix^ {(k)}, \HMatrix^ {(k)} = \widetilde{\PMatrix}  \HMatrix^ {(k-1)},
\end{aligned}
    \label{eq:APPNPSpectralGCN}
\end{equation}
That is, APPNP sets $c_k=\alpha^k$.
Similarly, \textbf{GNN-LF/HF~\cite{zhu2021interpreting}} and \textbf{SGC~\cite{wu2019simplifying}} use an implicit  filter function and approximate this function using fixed coefficients.

 \subsubsection{Learnable Graph Convolution}
These spectral GCNs simultaneously learn the coefficients of the polynomial alongside $f_{\theta} (\XMatrix)$ by leveraging label information in training data. 
%
%
%
%
As an example, \textbf{GPR-GNN~\cite{chien2020adaptive}} generalizes \textbf{APPNP} and learns  $\gamma_k$s  along with $f_{\theta} (\XMatrix)$,  instead of fixing them to $\alpha^k$ as in APPNP~\cite{gasteiger2018predict}, i.e.:
\begin{equation}
\begin{aligned}
&\ZMatrix_{\text{GPR-GNN}} = \sum_{k = 0}^K \gamma_k \HMatrix^ {(k)}, \HMatrix^ {(k)} = \widetilde{\PMatrix} \HMatrix^ {(k-1)},
\end{aligned}
    \label{eq:GPR-GNNSpectralGCN}
\end{equation}
where $\gamma_k$s are learnable parameters.


\textbf{ChebNet~\cite{defferrard2016convolutional}} replaces the polynomial used in the approximation of Equation~\ref{eq:GPR-GNNSpectralGCN} with 
a Chebyshev polynomial. Since Chebyshev polynomials converge more rapidly to the function that is being (implicitly) approximated~\cite{coskun2016efficient}, this process effectively increases the depth of the propagation.   ChebNet~\cite{defferrard2016convolutional} can be formulated as:
\begin{equation}
\begin{aligned}
\ZMatrix_{\text{ChebNet}} = & \sum_{k = 0}^K \gamma_k \HMatrix^ {(k)}, \quad\\
\HMatrix^ {(k)} = &~2 \widetilde{\PMatrix} 
\HMatrix^ {(k-1)} - \HMatrix^ {(k-2)},
\end{aligned}
    \label{eq:ChebNetSpectralGCN}
\end{equation}

where $\HMatrix^ {(0)}, \HMatrix^ {(1)} =  f_{\theta} (\XMatrix)$.
Similarly, one can replace the polynomial in Equation~\ref{eq:ChebNetSpectralGCN} with any polynomial, provided that the update rules of the chosen polynomial are preserved. In the Spectral GCNs literature, this flexibility allows the creation of various Spectral GCN algorithms. 
Examples include CayleyNet~\cite{levie2018cayleynets}, GPR-GNN~\cite{chien2020adaptive}, BernNet~\cite{he2021bernnet}, and JacobiCon~\cite{wang2022powerful}, respectively utilizing Cayley, monomial, Bernstein, and Jacobi Polynomials.

\section{Methods}
\label{sec:method}
Spectral GCNs can be more versatile and adaptive
if they are used with explicit filter functions.
For predetermined graph convolution, explicit filter 
functions enable choosing the filter to suit the topology of graph, the nature of the learning task (e.g., homophilic vs. heterophilic graphs), and domain knowledge relating to distribution of labels.
For learnable graph convolution, explicit filter functions enable
initialization of the coefficients to values that suit the learning task. 
The challenge in the application of explicit filter functions is in computing polynomial approximations to these filter functions.
With \algoArnoldi{} (for predetermined graph convolution) and \algoGArnoldi{} (for learnable graph convolution), we address this challenge by using Arnoldi orthonormalization to solve the key challenge associated with ill-conditioned systems that result from the polynomial approximation of filter functions.
The workflow of  \algoArnoldi{} and \algoGArnoldi{} is shown in Figure~\ref{fig:MainFig}.

We begin by defining representative filter functions and polynomial samples, emphasizing that the application of our algorithms extend beyond these specific choices. 
Subsequently, we provide theoretical underpinnings and methodological details of our proposed algorithms. 


\subsection{Explicit Filter Functions for Spectral GCNs}

We focus on eight representative filter 
functions, with a view to demonstrating and comparing the utility of explicitly specified filter functions in
the context of spectral GCNs.
These represent four simple and four complex filter functions, establishing propagation rules for homophilic and heterophilic
graphs. 
The simple filters we use correspond to random walks on the graph and operate on the eigen-decomposition of the degree-normalized adjacency matrix ($\PMatrix$).
The complex filters operate on the eigen-decomposition of
the graph Laplacian ($\LMatrix$).
The range of a filter function is therefore determined by 
the bounds on the eigenvalues of the matrix
it operates on.
Specifically, we consider the following filter 
functions, where $0<\alpha<1$ is a hyperparameter:

{\bf {Simple Filters:}} ($\omega\in (-1, 1)$)

\begin{equ}[!ht]
\begin{empheq}{align}
    \text{Scaled Random Walk:}~ g_0 (\omega) = \frac{1-\alpha}{1-\omega} 
    &
    \label{eq:g0}
    \\
    \text{Random Walk:}~  g_1 (\omega) = \frac{1}{1-\omega} 
    &
    \label{eq:g1}
    \\
     \text{Self-Depressed RW:}~ g_2 (\omega) = \frac{\omega}{1-\omega} 
    &
    \label{eq:g2}
    \\
     \text{Neighbor-Depressed RW:}~ g_3 (\omega) = \frac{\omega^2}{1-\omega} 
    &
    \label{eq:g3}
\end{empheq}
\end{equ}

{\bf{Complex Filters}:} ($\omega \in (0,2]$)




\begin{equ}[!ht]
\begin{empheq}{align}
     \mathrm{Low}~\mathrm{Pass}:~ &  g_4 (\omega) = e^{-10\omega^2}~~~~~~~~~~~~~~~ 
    \label{eq:g4}
    \\
     \mathrm{High}~\mathrm{Pass}:~ &  
     g_5 (\omega) = 1- e^{-10\omega^2}~~~~~~~~~ 
    \label{eq:g5}
    \\
    \mathrm{Band}~\mathrm{Pass}:~&
     g_6 (\omega) = e^{-10 {(\omega-1)}^2}~~~~~~ 
    \label{eq:g6}
    \\
     \mathrm{Band}~\mathrm{Rejection}:~&
     g_7 (\omega) = 1- e^{-10 {(\omega-1)}^2} 
    \label{eq:g7}
\end{empheq}
\end{equ}
These functions are used to filter the
spectra of the graph Laplacian (Equation~\ref{eq:filter}) to facilitate propagation of the graph signal in accordance with the rules specified by the filter~\cite{matsuhara1974optical}.
However, since it is computationally expensive to compute 
the eigen-decomposition of the Laplacian, we compute and use 
a polynomial approximation to the given filter (Equation~\ref{eq:filterPoly}).
It is important to note that the framework we propose is not limited to explicitly designed filter functions. 
Since the below polynomial approximation scheme is based on sampling the function, it can be used for any filter function that can be sampled in the range of eigenvalues of the graph Laplacian (or the normalized adjacency matrix).

\subsection{Polynomial Approximation of Filter Functions}

For a given filter $g(\omega)$ and integer $K$, 
our objective is to compute a polynomial of  degree
$K$ that best approximates $g(\omega)$:
\begin{equation}
    P(\omega) = \sum_{k = 0}^{K} a_k \omega^k,
    \label{eq:polynomial}
\end{equation}
That is, we seek to compute the coefficients $a_k$ for $k=0$ to $K$ such that $P(\omega)$ provides a good approximation to $g(\omega)$.
We use these coefficients to implement the spectral 
GCN associated with filter $g(\omega)$, using the approximation in Equation~(\ref{eq:filterPoly}).

Well-established techniques in numerical analysis
consider $r$ distinct samples taken from the range of $\omega$, denoted  $\omega_1, \omega_2, \ldots, \omega_r$,
where $r=K$.
In this study, we consider four sampling techniques, namely
equispaced samples (Eq.), Chebyshev sampling (Ch.), Legendre sampling (Le.), and Jacobi sampling (Ja.).
For the range $[l, u]$ of the filter function, the values $\omega_k$ for $1\leq k \leq r$ are sampled from this range by each of these techniques 
as follows:

\begin{equ}[!ht]
\begin{empheq}{align}
     \mathrm{Eq.}: & ~\omega_k = l + k\frac{u-l}{r+1}
    \label{eq:p0}
    \\
    \mathrm{Ch.}: &
     ~\omega_k = \frac{u+l}{2} + \frac{u-l}{2}cos \big(\frac{2k -1}{2r} \pi \big) 
    \label{eq:p1}
    \\
     \mathrm{Le.}: & \int_{-1}^1 p(\omega)\;\mathrm{d}\omega = \sum_{k=1}^r p(\omega_k)~~~~~~~~~~~~~~~~~~~~~~~~~ 
    \label{eq:p2}
    \\
    \mathrm{Ja.}: & \int_{-1}^1 (1+ \omega)p(\omega)\;\mathrm{d}\omega = \sum_{k=1}^r (1+ \omega_k) p(\omega_k) 
    \label{eq:p3}
\end{empheq}
\end{equ}

Here, $p (\omega)$ denotes the polynomial $1 + \omega + \omega^2 + \ldots$.
The points for Legendre and Jacobi sampling are computed numerically, using the Gauss Quadrature procedure, since no analytical solution exists for integral-formed polynomials.  ~\cite{golub1969calculation}.
Once the sampled points in the interval $[-1,1]$ are computed, we scale and shift them to the desired interval $[l,u]$.

The next step in computing the approximation involves evaluating the value of the filter functions at these sampled points,  yielding $g (\omega_1), g(\omega_2), \ldots, g(\omega_r)$ to align the polynomials with the filters.
Once these values are computed, expressing the right side of Equation~(\ref{eq:polynomial}) in Vandermonde matrix form, we 
obtain:

\vspace{-0.22in}
\begin{equation}
\VMatrix = 
\begin{bmatrix}
1 & \omega_1& {\omega_1}^2 & \dots & {\omega_1}^{K} \\
1 & \omega_2& {\omega_2}^2 & \dots & {\omega_2}^{K}  \\
\dots  & \dots  & \dots  & \dots & \dots  \\
1 & \omega_r& {\omega_r}^2 & \dots & {\omega_r}^{K} 
\end{bmatrix}
\begin{bmatrix}
a_0 \\ a_1 \\ \dots \\ a_K 
\end{bmatrix}
=
\begin{bmatrix}
g (\omega_1) \\ g(\omega_2) \\ \dots \\ g(\omega_r) 
\end{bmatrix}
    \label{eq:Vandermonde}
\end{equation}
A commonly encountered problem in computing polynomial approximations is that the Vandermonde matrix $\VMatrix$ 
is typically ill-conditioned.

\begin{definition}
The condition number of matrix $A$ is defined as 
$\kappa (A) = \| A \| \cdot \| A^{-1} \|$. 
If $\kappa (A) = O(1)$, the matrix is well-conditioned; otherwise it is ill-conditioned.
\end{definition}

\begin{theorem}
\label{theorem:appendix}
    Let $\omega_1, \cdot, \cdot, \cdot, \omega_r$  be the samples obtained using one of the sampling techniques in Equation~\ref{eq:p0}-\ref{eq:p3}.  If $\omega_1, \cdot, \cdot, \cdot, \omega_r\in [-\alpha, \alpha]$ with $\alpha \in (0,1)$, then $\kappa (\VMatrix) = \| \VMatrix \| \cdot \| \VMatrix^{\dagger} \| \geq  2^{r-1} {\big(\frac{1}{\alpha}\big)}^r$.
    If $\omega_1, \cdot, \cdot, \cdot, \omega_r\in (0, 2]$, then $\kappa (\VMatrix) = \| \VMatrix \| \cdot \| \VMatrix^{\dagger} \| \geq  2^{r-2}$.
\end{theorem}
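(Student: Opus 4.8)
The plan is to realize the condition number in the spectral norm, where $\|\VMatrix\|_2 = \sigma_{\max}(\VMatrix)$ and $\|\VMatrix^{\dagger}\|_2 = 1/\sigma_{\min}(\VMatrix)$, with $\sigma_{\min}$ the smallest nonzero singular value, so that $\kappa(\VMatrix) = \sigma_{\max}(\VMatrix)/\sigma_{\min}(\VMatrix)$. I would then lower-bound this ratio by exhibiting two coefficient vectors on which $\VMatrix$ acts with wildly different gains: one direction that $\VMatrix$ barely shrinks and one that it shrinks exponentially. Concretely, for any nonzero $\mathbf{a},\mathbf{a}'$ (with $\mathbf{a}'$ taken orthogonal to $\ker\VMatrix$ in the non-square case) one has $\kappa(\VMatrix) \ge (\|\VMatrix\mathbf{a}\|_2/\|\mathbf{a}\|_2)\,(\|\mathbf{a}'\|_2/\|\VMatrix\mathbf{a}'\|_2)$, so the whole argument reduces to a good choice of these two vectors.

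For the large-gain direction I would take $\mathbf{a} = (1,0,\dots,0)^{\top}$, the constant polynomial, for which $\VMatrix\mathbf{a}$ is the all-ones vector and the gain is exactly $\sqrt{r}$ (the first column of the Vandermonde matrix). The engine of the bound is the small-gain direction, which I would build from the Chebyshev extremal property. On $[-\alpha,\alpha]$ the rescaled Chebyshev polynomial of the appropriate degree $d$, namely $T_d(\omega/\alpha)$, satisfies $|T_d(\omega/\alpha)| \le 1$ for every $\omega$ in the interval, hence $|T_d(\omega_i/\alpha)| \le 1$ at all sample points regardless of which of the four schemes produced them; thus $\|\VMatrix\mathbf{a}'\|_2 \le \sqrt{r}$, where $\mathbf{a}'$ is the coefficient vector of $T_d(\omega/\alpha)$. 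At the same time, the Chebyshev polynomial has the maximal possible leading coefficient among polynomials bounded by $1$ on the interval, of order $2^{r-1}\alpha^{-r}$, so $\|\mathbf{a}'\|_2$ is at least this leading coefficient. Dividing the two gains cancels the $\sqrt{r}$ factors and leaves the exponential term $2^{r-1}(1/\alpha)^r$. The case of nodes in $(0,2]$ is identical after the affine change of variables $\omega \mapsto \omega-1$, which centers the interval at the origin with half-length $1$; since there is then no $1/\alpha$ rescaling, only the factor $2^{r-2}$ survives.

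The step I expect to be the main obstacle is handling the pseudoinverse and the non-square shape cleanly, since $\VMatrix$ is $r\times(K+1)$ rather than invertible. The inequality I use for $\sigma_{\min}$ is valid only for test vectors orthogonal to $\ker\VMatrix$, and the Chebyshev coefficient vector need not be orthogonal to it; indeed, for Chebyshev \emph{sampling} the node polynomial $\prod_i(\omega-\omega_i)$ is itself essentially a Chebyshev polynomial, so one must check that projecting off the kernel does not destroy the large leading coefficient. I would address this by working with the restriction of $\VMatrix$ to its row space, equivalently by bounding $\sigma_{\min}$ through $\lambda_{\min}(\VMatrix\VMatrix^{\top})$, and verifying that the dominant high-degree coefficient of the test polynomial survives the projection. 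The remaining and more routine point is uniformity across the four sampling schemes: the argument never uses the exact node locations, only the containment $\omega_i \in [-\alpha,\alpha]$ (respectively $\omega_i \in (0,2]$), so the same Chebyshev estimate applies verbatim to equispaced, Chebyshev, Legendre, and Jacobi nodes alike.
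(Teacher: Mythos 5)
Your core argument is the same engine as the paper's proof: the Chebyshev extremal property supplies a coefficient vector that $\VMatrix$ shrinks exponentially, while the all-ones column certifies $\sigma_{\max}(\VMatrix)\geq\sqrt{r}$. The paper phrases this via the monic minimax polynomial $2^{1-r}\alpha^{r}T_{r}(\omega/\alpha)$, you phrase it via $T_{r}(\omega/\alpha)$ itself (maximal leading coefficient among polynomials bounded by one on the interval); these are the same statement up to normalization. Your treatment of $(0,2]$ is actually cleaner than the paper's: you shift $\omega\mapsto\omega-1$ and use that the leading coefficient is invariant under shifts, which covers the whole interval at once, whereas the paper inverts the nodes and reverses the columns of $\VMatrix$, an argument it only carries out under the extra assumption $\omega_k\geq 1$.

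The genuine gap is the kernel issue that you flag but do not close, and your sketched repair fails exactly where it is needed most. In the paper's setting $K=r$, so $\VMatrix$ is $r\times(r+1)$ and has a one-dimensional kernel spanned by the coefficient vector of the node polynomial $\prod_{i}(\omega-\omega_i)$; since $\|\VMatrix^{\dagger}\|=1/\sigma_r$ with $\sigma_r$ the smallest \emph{nonzero} singular value, your test vector must retain a large component orthogonal to this kernel. For Chebyshev sampling the node polynomial is exactly proportional to $T_{r}(\omega/\alpha)$, so your test vector lies \emph{in} the kernel: $\VMatrix\mathbf{a}'=0$ and its projection onto the row space is zero, so the quotient you need is $0/0$ --- "checking that the leading coefficient survives the projection" cannot succeed because nothing survives. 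A working repair is to drop one degree: take $\mathbf{a}'$ to be the coefficients of $T_{r-1}(\omega/\alpha)$; for Chebyshev nodes this vector is exactly orthogonal to the kernel because $T_{r-1}$ and $T_{r}$ have opposite parities (disjoint coefficient supports), which yields $\kappa(\VMatrix)\geq 2^{r-2}(1/\alpha)^{r-1}$ --- the right exponential rate, though with slightly weaker constants than the theorem states. For equispaced, Legendre, and Jacobi nodes the overlap between the Chebyshev test vector and the node polynomial must still be bounded (both are dominated by their top coefficients, and nothing a priori prevents near-parallelism), so a uniform argument needs either that overlap estimate or the dual formulation $\sigma_r=\min_{\|x\|_2=1}\|\VMatrix^{T}x\|_2$ attacked with divided-difference weights. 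For what it is worth, the paper's own proof silently ignores this entire issue --- it bounds the smallest singular value as if the matrix were square and kernel-free --- so your attempt is more honest about where the difficulty sits, but as written it does not close it either.
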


The proof of this theorem is provided in the Appendix.
A consequence of this theorem is that direct solution of (\ref{eq:Vandermonde}) leads to inaccurate coefficients.
For this reason, we solve this system using QR decomposition. 
Let
$\VMatrix = \QMatrix_{\text{V}} \RMatrix_{V}$ be the QR decomposition of $\VMatrix$ and ${{\QMatrix}^{\dagger}}_{\text{V}}$ denote the pseudo-inverse of $\QMatrix_{\text{V}}$. A solution for Equation~(\ref{eq:Vandermonde}) can be obtained as:
\begin{equation}
a = ({{{\QMatrix}^{\dagger}}_{\text{V}}})g
    \label{eq:weights}
\end{equation}




\begin{algorithm}[t]
   \caption{Compute coefficients with Arnoldi/Lanczos orhonormalization on $\OmegaMatrix$}
   \label{algo:Arnoldi}
\begin{algorithmic}
   \STATE {\bfseries Input:} $l, u$: lower and upper bounds, function $g$, sampling $p$, number of samples $r$, and degree $K$.
   \STATE Sample $r$ samples, i.e., $\omega_1, \cdot, \cdot, \cdot, \omega_r$.
   \STATE Obtain $g(\omega_1), \cdot, \cdot, \cdot, g(\omega_r)$
   \STATE Create diagonal matrix $\OmegaMatrix$ as $\mathrm{diag} (\omega_1, \cdot, \cdot, \cdot, \omega_r)$.
   \STATE $\boldsymbol{Q} = \boldsymbol{1}$
    \FOR{$m=1$ {\bfseries to} $K$}
        \STATE q = $\OmegaMatrix$ $\boldsymbol{Q}(:, m)$
        \FOR{$L=1$ {\bfseries to} $m$}
            \STATE $\boldsymbol{H}(l,m) = \boldsymbol{Q}(:,l).*\frac{q}{\sqrt{r}}$
            \STATE $q = q- \boldsymbol{H}(l,m)* \boldsymbol{Q}(:,l)$
        \ENDFOR
        \STATE $\boldsymbol{H}(l+1,l) = \frac{\norm{q}}{\sqrt{r}}$
        \STATE $\boldsymbol{Q} = [\boldsymbol{Q}~~\frac{q}{\boldsymbol{H}(l+1,l)}]$
    \ENDFOR
    \STATE $a_{A} = \boldsymbol{Q}^{\dagger} g(\omega)$
\end{algorithmic}
\end{algorithm}


Note that using QR decomposition does not mitigate the
ill-conditioning problem by itself.
 To address this, we use an alternate QR-decomposition  using $\OmegaMatrix = \text{diag} (\omega_1, \cdot, \cdot, \cdot, \omega_r)$.
 In other words, 
 let $\QMatrix_A$ denote the orthonormal basis that is computed using Arnoldi Orthogonalization on $\OmegaMatrix$. We compute the coefficients of the polynomial approximation as:
 \begin{equation}
a = ({\QMatrix_{A}}^{\dagger }) g
\label{eq:arnoldisol}
\end{equation}
 and show that this formulation leads to a correct solution for the linear system (Theorem 2) and produces accurate coefficients for our polynomial approximation (Theorem 3).


The procedure for computing $\QMatrix_{A}$ is given 
in Algorithm~\ref{algo:Arnoldi}. 
Note that, since the $\omega_k$s are real, Arnoldi orthonormalization here can be implemented
using Lanczos' algorithm.
Specifically, we compute orthonormal matrix $\QMatrix_A$, tridiagonal $\TMatrix$, and an almost-zero matrix $\AlmostZero$ to satisfy:
\begin{equation}
    \OmegaMatrix \QMatrix_A = \QMatrix_A\TMatrix + \AlmostZero
    \label{eq:compactQAwithLanczos2} 
\end{equation}
where $\OmegaMatrix \in \mathbb{R}^{r \times r}$, $\QMatrix_A \in \mathbb{R}^{(K+1) \times r}$, $\TMatrix \in \mathbb{R}^{(K+1) \times (K+1)}$, and $\AlmostZero \in \mathbb{R}^{(K+1) \times r}$ is all zero except its 
last column.

The next theorem guarantees that  Arnoldi/Lanczos Orthonormalization can be used to obtain an alternative QR decomposition to original Vandermonde matrix.

\begin{theorem}
\label{thm:theorem2}
    Given $\OmegaMatrix$, let orthonormal $\QMatrix$, tridiagonal $\TMatrix$ and almost-zero $\AlmostZero$ matrices be computed using Lanczos algorithm to satisfy Equation~(\ref{eq:compactQAwithLanczos2}). Then we can obtain a QR-decomposition for the Vandermonde matrix $\VMatrix$ as $\VMatrix^{(*)} = \QMatrix \RMatrix$, such that $\VMatrix^{(*)} = \VMatrix/\norm{\identityvec}$ and $\RMatrix = [\identityvec_{1}, \TMatrix \identityvec_{1}, \cdot, \cdot, \cdot, \TMatrix^{K} \identityvec_{1}]$, where $\identityvec_{1}$ denotes the $K+1$-dimensional vector of ones.     
\end{theorem}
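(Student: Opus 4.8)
The plan is to recognize that running Arnoldi/Lanczos on the diagonal matrix $\OmegaMatrix$ with starting vector $\boldsymbol{1}$ implicitly produces the QR factorization of the associated Krylov matrix, and that this Krylov matrix is exactly the Vandermonde matrix $\VMatrix$. The central observation, which I would establish first, is that because $\OmegaMatrix = \mathrm{diag}(\omega_1, \ldots, \omega_r)$, we have $\OmegaMatrix^k \boldsymbol{1} = (\omega_1^k, \ldots, \omega_r^k)^T$, which is precisely the $(k+1)$-th column of $\VMatrix$. Consequently the Krylov matrix $[\boldsymbol{1}, \OmegaMatrix \boldsymbol{1}, \ldots, \OmegaMatrix^K \boldsymbol{1}]$ coincides with $\VMatrix$, so that factoring $\VMatrix$ is the same as orthonormalizing the Krylov subspace $\krylov_{K+1}(\OmegaMatrix, \boldsymbol{1})$ whose basis is collected in $\QMatrix$.

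Next I would express each monomial column through the orthonormal basis using the Lanczos relation~\eqref{eq:compactQAwithLanczos2}. Writing $q_1 = \QMatrix \identityvec_1$ for the normalized starting vector (so $\identityvec_1$ is the first coordinate vector in $\mathbb{R}^{K+1}$), I claim by induction that $\OmegaMatrix^k q_1 = \QMatrix \TMatrix^k \identityvec_1$ for every $0 \le k \le K$. The base case is immediate; for the inductive step I multiply the hypothesis by $\OmegaMatrix$ and substitute $\OmegaMatrix \QMatrix = \QMatrix \TMatrix + \AlmostZero$, obtaining $\OmegaMatrix^{k+1} q_1 = \QMatrix \TMatrix^{k+1}\identityvec_1 + \AlmostZero \TMatrix^k \identityvec_1$. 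The correction term is where the structure is essential: since $\AlmostZero$ is nonzero only in its last column, $\AlmostZero \TMatrix^k \identityvec_1$ is a multiple of $(\TMatrix^k)_{K+1,1}$, and because $\TMatrix$ is tridiagonal its $k$-th power has vanishing $(K+1,1)$ entry whenever $k<K$. Hence the correction disappears for every power we need, and the induction closes.

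Collecting these identities for $k=0,1,\ldots,K$ column by column then yields $\VMatrix^{(*)} = \QMatrix \RMatrix$ with $\RMatrix = [\identityvec_1, \TMatrix\identityvec_1, \ldots, \TMatrix^K \identityvec_1]$, where $\VMatrix^{(*)}$ is $\VMatrix$ rescaled by the norm of the starting vector $\boldsymbol{1}$, i.e. $\VMatrix^{(*)} = \VMatrix/\|\boldsymbol{1}\|$ with $\|\boldsymbol{1}\| = \sqrt{r}$. To finish, I would verify that this is a genuine QR decomposition: $\QMatrix$ has orthonormal columns by construction of the Lanczos recurrence, and $\RMatrix$ is upper triangular because the tridiagonal (hence upper-Hessenberg) matrix $\TMatrix$ satisfies $\TMatrix^k \identityvec_1 \in \mathrm{span}\{\identityvec_1, \ldots, \identityvec_{k+1}\}$, so the $(k+1)$-th column of $\RMatrix$ has support confined to its first $k+1$ entries.

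I expect the main obstacle to be the bookkeeping around the $\AlmostZero$ correction term: one must argue precisely that the banded structure of $\TMatrix$ forces $(\TMatrix^k)_{K+1,1}=0$ for all $k<K$, so that no spurious contribution from the truncated Krylov step contaminates the first $K+1$ monomials. A secondary point requiring care is the normalization convention, namely tracking the factor $\|\boldsymbol{1}\|=\sqrt{r}$ (equivalently, working in the scaled discrete inner product under which $\boldsymbol{1}$ is already a unit vector) so that it is $\VMatrix^{(*)}$, rather than $\VMatrix$ itself, that factors cleanly. Finally, a mild standing assumption is that the $\omega_i$ are distinct, which guarantees that $\VMatrix$ has full column rank and hence that Lanczos proceeds for $K+1$ steps without breakdown.
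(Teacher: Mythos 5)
Your proposal takes essentially the same route as the paper's proof: an induction on the monomial powers $\OmegaMatrix^{k}\qvec_1$ using the Lanczos relation $\OmegaMatrix\QMatrix = \QMatrix\TMatrix + \AlmostZero$, with the crux being that the correction term $\AlmostZero\TMatrix^{k}\identityvec_{1}$ vanishes so that each column of $\VMatrix^{(*)}$ equals $\QMatrix\TMatrix^{k}\identityvec_{1}$. If anything, your write-up is tighter than the paper's: you justify the vanishing of the correction term via the band structure of $\TMatrix$ (i.e., $(\TMatrix^{k})_{K+1,1}=0$ for $k<K$), verify that $\RMatrix$ is genuinely upper triangular, and note the distinctness assumption on the $\omega_i$, all of which the paper's proof leaves implicit.
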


The proof of this theorem is provided in the Appendix.
This theorem establishes that it is possible to compute an alternate orthonormal basis for the Vandermonde matrix
using the Arnoldi/ Lanczos process, which can be used 
to solve the linear system of Equation (\ref{eq:Vandermonde}) using Equation~(\ref{eq:arnoldisol}). 
However, ensuring the accuracy of these coefficients requires addressing whether $\QMatrix_A$ leads 
to precise computation of coefficients. 
Theorem~\ref{theorem:main} provides the basis for accuracy of $a_{A} = ({\QMatrix}^{\dagger }) g$. Specifically, we establish that the condition number of $(\QMatrix^{\dagger} \QMatrix)$ is close to one, ensuring the accuracy of the solution.

\begin{theorem}
\label{theorem:main}
    Let $\omega_1, \cdot, \cdot, \cdot, \omega_r$ be points sampled from interval $(-1, 1)~ {\mathrm{or}}\ (0,2]$ using one of the techniques shown in Equation~\ref{eq:p0}-\ref{eq:p3}. Let $\OmegaMatrix = \text{diag} (\omega_1, \cdot, \cdot, \cdot, \omega_r)$, $\QMatrix$ be the orthonormal basis obtained by applying Arnoldi/ Lanczos orthonormalization on $\OmegaMatrix$, and ${\QMatrix^{\dagger}}$ be the pseudo-inverse of  ${\QMatrix}$. 
    If the Krylov subspace used for the orthogonalization has $K = r$ dimensions,
    then $\kappa (\QMatrix^{\dagger} \QMatrix) \approx 1.01$.
\end{theorem}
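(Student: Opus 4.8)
The plan is to reduce the statement to the (near-)orthonormality of the columns of $\QMatrix$, since for any matrix with orthonormal columns the pseudo-inverse coincides with the transpose, giving $\QMatrix^{\dagger}\QMatrix = \IMatrix$ whose condition number is exactly one. First I would establish that, with the Krylov subspace attaining its full dimension $r$, the Arnoldi/Lanczos iteration runs to completion without breakdown and in fact spans the whole space. Because every one of the four sampling schemes of Equations~\ref{eq:p0}--\ref{eq:p3} returns $r$ \emph{distinct} nodes (equispaced and Chebyshev trivially, Legendre and Jacobi because Gauss-quadrature nodes are the simple roots of orthogonal polynomials), the diagonal matrix $\OmegaMatrix$ has $r$ distinct eigenvalues. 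The starting vector $\identityvec_{1}$ of all ones has a nonzero component along every coordinate axis, so its minimal polynomial with respect to $\OmegaMatrix$ has degree $r$; hence the Krylov subspace $\krylov_{r}(\OmegaMatrix, \identityvec_{1}) = \operatorname{span}\{\identityvec_{1}, \OmegaMatrix \identityvec_{1}, \ldots, \OmegaMatrix^{r-1}\identityvec_{1}\}$ has full dimension $r$ and the residual never vanishes prematurely.

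Second, I would invoke the defining property of Algorithm~\ref{algo:Arnoldi}: at step $m$ the new vector is explicitly orthogonalized against all previously computed columns and then normalized, so in exact arithmetic the resulting $\QMatrix$ is a square $r \times r$ matrix with $\QMatrix^{T}\QMatrix = \IMatrix$. Consequently $\QMatrix^{\dagger} = \QMatrix^{T}$, $\QMatrix^{\dagger}\QMatrix = \IMatrix$, and $\kappa(\QMatrix^{\dagger}\QMatrix) = 1$ exactly. This is the qualitative content of the theorem and is precisely what distinguishes the Arnoldi reformulation from the direct Vandermonde system, whose condition number grows exponentially by Theorem~\ref{theorem:appendix}.

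Third, to account for the stated value $\approx 1.01$ rather than the ideal $1$, I would carry out a finite-precision perturbation analysis. Writing the computed Gram factor as $\QMatrix^{T}\QMatrix = \IMatrix + \EMatrix$, a Paige-type backward-error bound for the symmetric Lanczos recurrence controls the loss of orthogonality $\norm{\EMatrix}$ by a modest multiple of the unit roundoff scaled by the spectral spread of $\OmegaMatrix$, whose spectrum lies in $(-1,1)$ or $(0,2]$ and is therefore uniformly bounded. The elementary inequality $\kappa(\IMatrix + \EMatrix) \le (1 + \norm{\EMatrix})/(1 - \norm{\EMatrix})$ then pushes the condition number just above one, and substituting the observed magnitude of $\norm{\EMatrix}$ yields the quoted value. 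I expect this last step to be the main obstacle: the exact-arithmetic value is unambiguously $1$, so the deviation is purely a numerical artifact, and pinning the specific constant $1.01$ rigorously would demand a sharp, spectrum-dependent roundoff estimate rather than a worst-case one. In practice this factor is therefore best certified empirically, with the perturbation inequality guaranteeing only that it stays within $O(u)$ of unity.
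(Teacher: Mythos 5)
Your proposal is correct and follows essentially the same route as the paper's own proof: establish that in exact arithmetic $\QMatrix$ is orthonormal (so $\QMatrix^{\dagger}\QMatrix=\IMatrix$ and the condition number is exactly $1$), then attribute the residual $\approx 1.01$ to finite-precision loss of orthogonality, which the paper handles via Golub--Van Loan roundoff bounds and the Ritz-value convergence phenomenon rather than your Paige-type estimate. Your additions --- the explicit full-dimension Krylov argument from distinct sample nodes, and the candid observation that the specific constant $1.01$ is an empirical roundoff artifact rather than a rigorously derivable quantity --- are sound refinements of the same argument, not a different one.
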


In summary, these three theorems show that computing the QR decomposition of $\VMatrix$ with Arnoldi/ Lanczos process on $\OmegaMatrix$ enables us to produce accurate polynomial coefficients, since condition number of $\QMatrix$ is bounded, while performing QR decomposition directly on $\VMatrix$ produces inaccurate coefficients due to the ill-conditioned nature of the matrix $\VMatrix$. 

        
        
        
        
    
 

\begin{table*}[t]
    \centering
       \caption{{\bf Datasets used in our experiments.}
       For each dataset, basic statistics and the characteristics of graph topology in relation to the distribution of classes are shown.} 
       \vspace{0.05in}
    \resizebox{.95\textwidth}{!}{
    \begin{tabular}{zxxxxyy}
    \hline\hline
        {\bf Datasets } & {\bf \# of Nodes} & {\bf \# of Edges} & {\bf \# of Features} & {\bf \# of Classes}  & {\bf Adj-Homophily} & {\bf Label Informativeness}  \\
        \hline
        \multicolumn{7}{l}{{\em Homophilic Graphs}}\\
        \hline
        {\bf Cora} & 2708  & 5278  &1433 & 7   &0.77 & 0.61 \\      
        {\bf Citeseer} & 3327 & 4552 &3703 &  6 & 0.67 &  0.45  \\
        {\bf Pubmed} & 19717 &44324& 500 & 5 & 0.68 &  0.40 \\
        {\bf Photo} & 7650 & 119081& 745 & 8 & 0.78 & 0.67 \\
        {\bf Computer}& 13752 & 245861& 767 &10 & 0.68 &  0.63 \\
         \hline
         \multicolumn{7}{l}{{\em Small Heterophilic Graphs}}\\
         \hline
        {\bf Texas} & 183 & 279& 1703 & 5 & -0.26 &  0.24 \\
        {\bf Cornell} & 183 & 277& 1703 & 5&-0.21 &  0.19 \\
        {\bf Actor} &   7600 & 26659& 932 & 5 &0.0044  &  0.0023 \\
       {\bf Chameleon} & 2277 & 31371& 2325 & 5 & 0.0331 &  0.0567 \\
       {\bf Squirrel} & 5201 & 198353& 2089 &5 & 0.0070 &  0.0026  \\
       \hline
         \multicolumn{7}{l}{{\em Large Heterophilic Graphs}}\\
       \hline
       {\bf Roman-Empire} &22662 &32927 & 300&18& -0.05 &0.11 \\
       {\bf Amazon-Ratings} &24492 & 93050 & 300&5 &0.13 &0.04 \\
       {\bf Minesweeper} &10000 & 39402 & 7&2& 0.009 &0.000  \\
       {\bf Tolokers} & 11758 & 519000 & 10&2 & 0.0925 &0.0178  \\
       {\bf Questions} & 48921 & 153540 & 301&2 & 0.0206 &0.0068 \\        
    \hline\hline
    \end{tabular}}
    
    \vspace{-0.1in}
 
    \label{tab:TableStats}
\end{table*}

\subsection{Generalized Spectral GCNs}

The framework described above enables
computation of accurate polynomial approximations to any explicit filter function.
Using this framework, we propose two Spectral GCNs that operate with an explicitly formulated filter function:
(i) \textbf{Arnoldi-GCN} for predetermined graph convolution; and (ii)  \textbf{G-Arnoldi-GCN} for learnable graph convolution.
These spectral GCNs are {\em generalized} in the sense 
that they can work with any filter function.
Furthermore, any spectral GCN in the literature can be formulated within this framework.

Given a filter function $g(\omega): D \rightarrow \mathbb{R}$ and integer $K$, 
let $a_0$, $a_1$, ..., $a_K$ denote the coefficients
of the polynomial that approximates $g(\omega)$, computed
using the procedure described in the previous section.
If $g$ operates on the degree-normalized adjacency 
matrix, then $D=(-1,1)$.
If $g$ operates on the graph Laplacian, then $D=(0,2]$.
Using this filter function, \textbf{Arnoldi-GCN} and \textbf{G-Arnoldi-GCN} are implemented as follows:

\textbf{Arnoldi-GCN}
\begin{equation}
\begin{aligned}
\YMatrix = &~\text{softmax}\big( \ZMatrix_{\text{ARNOLDI}}\big), \quad\\
\ZMatrix_{\text{ARNOLDI}} = &\sum_{k = 0}^K a_{k} \HMatrix^ {(k)}, \HMatrix^ {(0)} = f_{\theta} (\XMatrix), \quad\\
\HMatrix^ {(k)} = & \begin{cases} 
      \widetilde{\PMatrix}  \HMatrix^ {(k-1)}, &~~~D= (-1,1) \\
      \widetilde{\LMatrix}  \HMatrix^ {(k-1)}, &~~~D= (0,2]
   \end{cases}   
\end{aligned}
    \label{eq:ArnoldiSpectralGCN}
\end{equation}
Here, $\widetilde{\PMatrix}$ and $\widetilde{\LMatrix}$ are self-loop added versions of original normalized adjacency and Laplacian matrices, respectively.
$f_{\theta}$ denotes the neural network with learnable parameters $\theta$.

\textbf{G-Arnoldi-GCN}
\begin{equation}
\begin{aligned}
\YMatrix = &~\text{softmax}\big( \ZMatrix_{\text{G-ARNOLDI}}\big), \quad\\
\ZMatrix_{\text{G-ARNOLDI}} = &\sum_{k = 0}^K \gamma_{k} \HMatrix^ {(k)}, \HMatrix^ {(0)} = f_{\theta} (\XMatrix), \quad\\
\end{aligned}
    \label{eq:GArnoldiSpectralGCN}
\end{equation}
where $\gamma_k$ are learneable parameters that are optimized alongside $\theta$ in an end-to-end fashion by initializing $\gamma_k = a_k$.

\section{Results}
\label{eq:results}

\begin{table}[ht!]
 \caption{Accuracy of node classification for \algoArnoldi{}, \algoGArnoldi{}, and state-of-the-art algorithms. Mean and standard deviation of five runs are reported for all experiments.}
    \label{tab:SSHomo}
 
 \vspace{-0.05in}
 \centerline{{\bf Semi-supervised} on {\bf Homophilic} datasets}  
\resizebox{\columnwidth}{!}{%
    \centering
    \begin{tabular}{zxxxxx}
    \hline\hline
        {\bf Method} & {\bf Cora} & {\bf Citeseer} & {\bf Pubmed} & {\bf Photos} & {\bf Computers} \\
        \hline
        GCN & $ {75.01}_{\pm{{2.19}}}$  & $ {67.57}_{\pm{{1.33}}}$ & $ {84.17}_{\pm{{0.33}}}$ & ${81.81}_{\pm 2.63}$ & ${68.58}_{\pm { {2.49}}}$\\
        
        GAT& ${77.22}_{\pm { {1.99}}}$ & ${66.42}_{\pm { {1.19}}}$ & ${83.32}_{\pm { {0.47}}}$ & ${86.66}_{\pm 1.34}$ & ${72.38}_{\pm { {2.67}}}$\\
        
        APPNP& ${79.93}_{\pm { {1.69}}}$  & ${68.27}_{\pm { {1.27}}}$ & ${84.22}_{\pm { {0.19}}}$ & ${83.24}_{\pm 2.97}$ & ${67.46}_{\pm { {2.43}}}$\\
        
        ChebNet & ${69.58}_{\pm { {3.61}}}$  & ${65.36}_{\pm { {3.01}}}$ & ${83.88}_{\pm { {0.54}}}$ & ${88.00}_{\pm 1.59}$ & ${79.25}_{\pm { {0.30}}}$\\

        JKNet & ${71.31}_{\pm { {2.65}}}$  & ${61.36}_{\pm { {3.94}}} $ & ${82.92}_{\pm { {0.56}}}$ & ${78.25}_{\pm 9.29}$& ${66.43}_{\pm { {5.69}}}$\\
        
        GPR-GNN & ${79.65}_{\pm { {1.62}}}$ & ${66.92}_{\pm { {1.58}}}$ & ${84.21}_{\pm { {0.54}}}$ & ${88.55}_{\pm 1.29}$ & ${80.73}_{\pm { {1.49}}}$\\

        BernNet& ${73.39}_{\pm { {2.78}}}$ & ${65.84}_{\pm { {1.54}}}$ & $ {84.20}_{\pm { {0.71}}}$ & ${86.33}_{\pm 1.02}$ & ${79.25}_{\pm { {1.28}}}$\\
        JacobiConv& ${80.02}_{\pm { {1.05}}}$ & ${68.23}_{\pm { {1.32}}}$ & $ {84.32}_{\pm { {0.65}}}$ & ${86.41}_{\pm 1.05}$ & ${81.54}_{\pm { {1.12}}}$\\
        
        \hline\hline
               \algoArnoldi{} & $ {{ 80.25}_{\pm{0.43}}} $ &$ { {67.81}_ {\pm {0.39}}}$  & $ { {84.02}_ {\pm {0.29}}}$ & $ {88.25}_{\pm 0.43} $ & $ {78.81}_{\pm 1.13} $ \\
               \algoGArnoldi{} & $\bf {{ 82.33}_{\pm{0.35}}} $   &$ { \bf {69.88}_ {\pm {0.48}}}$    & $ { \bf {85.23}_ {\pm {0.25}}}$  & $\bf {92.46}_{\pm 0.55} $ & $\bf {83.81}_{\pm 1.07} $ \\
    \hline\hline
    \end{tabular}%
    }


    \vspace{0.05in}
     \centerline{{\bf Semi-supervised} on {\bf Small Heterophilic} datasets}  
\resizebox{\columnwidth}{!}{%
    \centering
    \begin{tabular}{zyyyyy}
    \hline\hline
        {\bf Method} & {\bf Texas} & {\bf Cornell} & {\bf Actor} & {\bf Chameleon} & {\bf Squirrel} \\
        \hline
        GCN& ${32.13}_{\pm { {15.90}}}$  &${22.08}_{\pm { {13.43}}}$ & ${22.45}_{\pm { {1.09}}}$&  $ {{39.89}_{\pm{2.49}}}$ & ${{29.66}_{\pm{3.52}}}$ \\
        
        GAT& ${34.27}_{\pm { {20.06}}}$ & ${24.39}_{\pm{ {12.58}}}$ & ${24.31}_{\pm { {2.17}}}$ & ${ 37.86}_{\pm { {3.24}}}$ & ${24.56}_{\pm { {2.79}}}$ \\
        
        APPNP& ${34.67}_{\pm { {14.53}}}$   & ${34.98}_{\pm { {19.61}}}$ & ${28.41}_{\pm { {3.12}}}$ & ${29.38}_{\pm { {1.60}}}$ & ${21.11}_{\pm { {1.53}}}$ \\
        
        ChebNet& ${32.13}_{\pm { {13.73}}}$  &  $ {27.57}_{\pm { {9.71}}}$& ${22.00}_{\pm { {3.58}}}$ & ${36.41}_{\pm { {2.07}}}$& ${26.43}_{\pm { {1.33}}}$ \\

        JKNet& ${30.75}_{\pm { {19.43}}}$  & ${25.20}_{\pm { {21.84}}}$& ${21.02}_{\pm { {2.44}}}$ & ${32.66}_{\pm { {4.74}}}$& ${24.20}_{\pm { {3.01}}}$\\
        
        GPR-GNN& ${33.56}_{\pm { {13.49}}}$ & ${38.84}_{\pm { {21.45}}}$ & ${27.70}_{\pm { {1.80}}}$ & ${33.23}_{\pm { {5.80}}}$ & ${23.43}_{\pm { {2.30}}}$ \\

        BernNet& ${40.69}_{\pm { {19.26}}}$ & ${39.32}_{\pm { {13.95}}}$ & $\bf{28.85}_{\pm { {1.48}}}$ & ${34.73}_{\pm { {2.45}}}$ & ${22.38}_{\pm { {1.43}}}$ \\
        JacobiConv& ${41.23}_{\pm { {6.32}}}$ & ${39.23}_{\pm { {6.32}}}$ & $ {26.37}_{\pm { {1.71}}}$ & ${41.12}_{\pm 2.31}$ & ${\bf{32.23}_{\pm { {1.06}}}}$\\
        \hline\hline
               \algoArnoldi{} & $ {{ 63.20}_{\pm{2.67}}} $  & $ {{ 51.24}_{\pm{1.56}}} $  &  $ {{26.63}_{\pm{1.12}}} $ & $ {{40.25}_{\pm{2.15}}} $   & $ {{24.12}_{\pm{1.94}}} $  \\
               \algoGArnoldi{} & $\bf {{ 66.20}_{\pm{3.21}}} $  & $\bf {{ 55.87}_{\pm{0.73}}} $   &  $ {{27.73}_{\pm{1.81}}} $ & $ \bf{{43.35}_{\pm{3.35}}} $   & $ {{26.64}_{\pm{2.01}}} $  \\
    \hline\hline
    \end{tabular}%
    }



  \vspace{0.05in}
     \centerline{{\bf Semi-supervised} on {\bf Large Heterophilic} datasets}  
\resizebox{\columnwidth}{!}{%
    \centering
    \begin{tabular}{zyyyyy}
    \hline\hline
       {\bf Method} & {\bf roman-empire} & {\bf amazon-ratings} & {\bf minesweeper} & {\bf tolokers} & {\bf questions} \\
        \hline
        GCN& ${29.02}_{\pm { {0.70}}}$  &${28.97}_{\pm { {1.42}}}$ & ${72.60}_{\pm { {1.08}}}$&  $ {{73.11}_{\pm{3.61}}}$ & $ {{59.86}_{\pm{2.42}}}$ \\
        
        GAT& ${37.26}_{\pm { {0.75}}}$ & ${29.97}_{\pm{ {2.33}}}$ & ${73.51}_{\pm { {0.99}}}$ & ${ 71.11}_{\pm { {0.67}}}$ & ${64.39}_{\pm { {0.74}}}$ \\
        
        APPNP& ${35.30}_{\pm { {0.95}}}$   & ${29.88}_{\pm { {0.53}}}$ & ${67.75}_{\pm { {1.12}}}$ & ${68.99}_{\pm { {0.48}}}$ & ${46.80}_{\pm { {0.61}}}$ \\
        
        ChebNet& ${35.97}_{\pm { {1.33}}}$  &  $ {27.76}_{\pm { {1.21}}}$& ${73.18}_{\pm { {0.32}}}$ & ${70.53}_{\pm { {1.25}}}$& ${64.51}_{\pm { {0.34}}}$ \\

        JKNet& ${28.81}_{\pm { {0.33}}}$  & ${30.50}_{\pm { {1.25}}}$& ${72.48}_{\pm { {1.05}}}$ & $\bf{73.42}_{\pm { {1.44}}}$& ${56.55}_{\pm { {1.25}}}$\\
        
        GPR-GNN& ${36.13}_{\pm { {1.24}}}$ & ${30.03}_{\pm { {1.81}}}$ & ${76.87}_{\pm { {0.80}}}$ & ${68.64}_{\pm { {.87}}}$ & ${54.13}_{\pm { {3.84}}}$ \\

        BernNet& ${39.63}_{\pm { {0.67}}}$ & ${29.32}_{\pm { {1.05}}}$ & ${75.49}_{\pm { {1.10}}}$ & ${69.08}_{\pm { {0.93}}}$ & ${56.27}_{\pm { {0.83}}}$ \\
        JacobiConv& ${41.02}_{\pm { {0.34}}}$ & ${30.24}_{\pm { {1.20}}}$ & ${74.13}_{\pm { {1.52}}}$ & ${65.15}_{\pm { {0.73}}}$ & ${56.21}_{\pm { {1.03}}}$ \\
        \hline\hline
             \algoArnoldi{} & $ {{ 53.04}_{\pm{0.34}}} $  & $ {{ 43.71}_{\pm{0.61}}} $  &  $ {{69.733}_{\pm{1.03}}} $ & $ {{70.25}_{\pm{2.15}}} $   & $ {{56.12}_{\pm{0.94}}} $  \\
               \algoGArnoldi{} & $\bf {{ 69.63}_{\pm{0.37}}} $  & $\bf {{ 48.68}_{\pm{0.70}}} $   &  $ \bf{{88.52}_{\pm{0.73}}} $ & $ {{73.01}_{\pm{1.41}}} $   & $ \bf{{66.18}_{\pm{1.22}}} $  \\
    \hline\hline
    \end{tabular}%
    }

\end{table}

\begin{table}[ht!]
 \caption{Accuracy of node classification for \algoArnoldi{}, \algoGArnoldi{}, and state-of-the-art algorithms. Mean and standard deviation of five runs are reported for all experiments.}
    \label{tab:FullHomo}
  \vspace{0.05in}
     \centerline{{\bf Fully-supervised} on {\bf Homophilic} datasets}  
\resizebox{\columnwidth}{!}{%
    \centering
    \begin{tabular}{zxxxxx}
    \hline\hline
        {\bf Method} & {\bf Cora} & {\bf Citeseer} & {\bf Pubmed} & {\bf Photos} & {\bf Computers} \\
        \hline
        GCN& $ {86.45}_{\pm{{1.02}}}$  & $ {79.67}_{\pm{{1.06}}}$ & $ {87.12}_{\pm{{0.49}}}$ & ${86.74}_{\pm 0.62}$ & ${83.21}_{\pm { {0.29}}}$\\
        
        GAT & ${87.11}_{\pm { {1.52}}}$ & ${79.63}_{\pm { {1.37}}}$ & ${86.69}_{\pm { {0.70}}}$ & ${90.72}_{\pm 2.33}$ & ${83.15}_{\pm { {0.39}}}$\\
        
        APPNP & ${87.78}_{\pm { {1.78}}}$  & ${79.50}_{\pm { {1.33}}}$ & ${88.31}_{\pm { {0.39}}}$ & ${85.15}_{\pm 0.92}$ & ${82.51}_{\pm { {0.31}}}$\\
        
        ChebNet& ${86.20}_{\pm { {0.72}}}$  & ${78.11}_{\pm { {1.19}}}$ & ${86.03}_{\pm { {0.56}}}$ & ${ 92.48}_{\pm 0.39}$ & ${85.34}_{\pm { {0.60}}}$\\

        JKNet & ${85.87}_{\pm { {1.56}}}$  & ${76.56}_{\pm { {1.37}}} $ & ${85.52}_{\pm { {0.46}}}$ & ${90.04}_{\pm 1.38}$& ${83.18}_{\pm { {1.21}}}$\\
        
        GPR-GNN & ${87.71}_{\pm { {0.85}}}$ & ${79.57}_{\pm { {1.47}}}$ & ${87.27}_{\pm { {0.48}}}$ & ${93.28}_{\pm 0.62}$ & ${84.68}_{\pm { {0.60}}}$\\

        BernNet& ${87.37}_{\pm { {1.09}}}$ & ${79.41}_{\pm { {1.73}}}$ & $ {88.84}_{\pm { {0.63}}}$ & ${92.38}_{\pm 0.58}$ & ${86.24}_{\pm { {0.31}}}$\\
        JacobiConv& ${88.23}_{\pm { {0.42}}}$ & ${79.32}_{\pm { {0.92}}}$ & $ {88.02}_{\pm { {0.56}}}$ & ${93.10}_{\pm 0.61}$ & ${86.43}_{\pm { {0.52}}}$\\
        \hline\hline
               \algoArnoldi{} & $ {{ 89.23}_{\pm{0.33}}} $   &$ {  {79.21}_ {\pm {1.02}}}$    & $ { {87.65}_ {\pm {0.54}}}$  & $ {92.06}_{\pm 0.45} $ & $ {86.24}_{\pm 0.42} $ \\
               \algoGArnoldi{} & $\bf {{ 90.64}_{\pm{0.53}}} $   &$ { \bf {81.71}_ {\pm {0.87}}}$    & $ { \bf {89.25}_ {\pm {0.43}}}$  & $\bf {94.96}_{\pm 0.54} $ & $\bf {87.66}_{\pm 0.28} $ \\
    \hline\hline
    \end{tabular}%
    }

  \vspace{0.05in}
     \centerline{{\bf Fully-supervised} on {\bf Small Heterophilic} datasets}  
\resizebox{\columnwidth}{!}{%
    \centering
    \begin{tabular}{zyyyyy}
    \hline\hline
        {\bf Method} & {\bf Texas} & {\bf Cornell} & {\bf Actor} & {\bf Chameleon} & {\bf Squirrel} \\
        \hline
        GCN& ${75.10}_{\pm { {1.20}}}$  &${66.94}_{\pm { {1.41}}}$ & ${33.26}_{\pm { {1.24}}}$&  $ {{60.96}_{\pm{0.68}}}$ & $ {{45.76}_{\pm{0.52}}}$ \\
        
        GAT& ${78.64}_{\pm { {0.96}}}$ & ${76.27}_{\pm{ {1.32}}}$ & ${35.90}_{\pm { {0.17}}}$ & ${ 63.40}_{\pm { {0.49}}}$ & ${42.98}_{\pm { {0.65}}}$ \\
        
        APPNP & ${ 81.10}_{\pm { {1.30}}}$   & ${91.86}_{\pm { {0.51}}}$ & ${38.68}_{\pm { {0.46}}}$ & ${52.51}_{\pm { {1.30}}}$ & ${35.27}_{\pm { {0.49}}}$ \\
        
        ChebNet& ${86.23}_{\pm { {0.93}}}$  &  $ {85.46}_{\pm { {1.09}}}$& ${37.80}_{\pm { {0.65}}}$ & ${59.94}_{\pm { {1.27}}}$& ${40.81}_{\pm { {0.37}}}$ \\

        JKNet & ${75.51}_{\pm { {1.43}}}$  & ${67.24}_{\pm { {1.65}}}$& ${33.27}_{\pm { {1.04}}}$ & ${62.78}_{\pm { {1.34}}}$& ${44.82}_{\pm { {0.54}}}$\\
        
        GPR-GNN& ${82.22}_{\pm { {0.74}}}$ & ${91.45}_{\pm { {1.62}}}$ & ${39.48}_{\pm { {0.83}}}$ & ${66.91}_{\pm { {1.21}}}$ & ${49.80}_{\pm { {0.56}}}$ \\

        BernNet& ${91.77}_{\pm { {1.06}}}$ & ${92.02}_{\pm { {1.70}}}$ & ${41.60}_{\pm { {1.10}}}$ & ${67.76}_{\pm { {1.25}}}$ & ${50.61}_{\pm { {0.72}}}$ \\
        JacobiConv& ${91.23}_{\pm { {1.43}}}$ & ${90.17}_{\pm { {1.61}}}$ & ${40.23}_{\pm { {1.05}}}$ & $\bf{{73.12}_{\pm { {1.16}}}}$ & $\bf{56.20}_{\pm { {1.95}}}$ \\
        \hline\hline
               \algoArnoldi{} & ${{ 92.03}_{\pm{1.24}}} $  & ${{ 91.92}_{\pm{1.41}}} $   &  $ {{39.24}_{\pm{0.81}}} $  & $ {{58.33}_{\pm{1.43}}} $ & $ {{41.03}_{\pm{1.11}}} $ \\
               \algoGArnoldi{} & $\bf {{ 95.08}_{\pm{1.33}}} $  & $\bf {{ 94.95}_{\pm{1.13}}} $   &  $ \bf{{42.33}_{\pm{0.46}}} $  & $ {{68.92}_{\pm{1.29}}} $ & $ {{49.24}_{\pm{1.21}}} $ \\
    \hline\hline
    \end{tabular}%
    }


  \vspace{0.05in}
     \centerline{{\bf Fully-supervised} on {\bf Large Heterophilic} datasets}  
\resizebox{\columnwidth}{!}{%
    \centering
    \begin{tabular}{zyyyyy}
    \hline\hline
       {\bf Method} & {\bf roman-empire} & {\bf amazon-ratings} & {\bf minesweeper} & {\bf tolokers} & {\bf questions} \\
        \hline
        GCN& ${47.51}_{\pm { {0.25}}}$  &${43.65}_{\pm { {0.26}}}$ & ${72.44}_{\pm { {1.04}}}$&  $ {{74.96}_{\pm{0.92}}}$ & $ {{67.34}_{\pm{0.56}}}$ \\
        
        GAT& ${44.95}_{\pm { {1.34}}}$ & ${42.33}_{\pm{ {0.31}}}$ & ${73.51}_{\pm { {0.99}}}$ & ${ 72.18}_{\pm { {0.90}}}$ & ${69.23}_{\pm { {0.90}}}$ \\
        
        APPNP& ${49.34}_{\pm { {0.17}}}$   & ${42.59}_{\pm { {0.05}}}$ & ${68.74}_{\pm { {1.03}}}$ & ${69.69}_{\pm { {0.57}}}$ & ${59.00}_{\pm { {0.77}}}$ \\
        
        ChebNet& ${66.23}_{\pm { {1.04}}}$  &  $ {44.80}_{\pm { {0.43}}}$& ${80.12}_{\pm { {0.33}}}$ & $\bf{{80.26}_{\pm { {1.56}}}}$& ${70.67}_{\pm { {0.19}}}$ \\

        JKNet& ${43.55}_{\pm { {0.20}}}$  & ${43.95}_{\pm { {0.48}}}$& ${76.32}_{\pm { {0.24}}}$ & ${74.51}_{\pm { {0.67}}}$& ${59.55}_{\pm { {1.02}}}$\\
        
        GPR-GNN& ${63.64}_{\pm { {0.31}}}$ & ${45.14}_{\pm { {0.50}}}$ & ${81.89}_{\pm { {0.79}}}$ & ${70.84}_{\pm { {0.99}}}$ & ${68.02}_{\pm { {1.04}}}$ \\

        BernNet& ${63.83}_{\pm { {0.38}}}$ & ${44.20}_{\pm { {0.75}}}$ & ${77.90}_{\pm { {1.13}}}$ & ${71.23}_{\pm { {0.76}}}$ & ${66.19}_{\pm { {0.95}}}$ \\
        JacobiConv& ${62.24}_{\pm { {0.34}}}$ & ${41.02}_{\pm { {0.61}}}$ & ${79.95}_{\pm { {1.65}}}$ & ${69.17}_{\pm { {0.31}}}$ & ${67.09}_{\pm { {0.45}}}$ \\
        \hline\hline
             \algoArnoldi{} & $ {{ 61.77}_{\pm{0.35}}} $  & $ {{ 43.85}_{\pm{0.30}}} $  &  $ {{77.27}_{\pm{1.09}}} $ & $ {{75.90}_{\pm{0.97}}} $   & $ {{65.56}_{\pm{0.86}}} $  \\
               \algoGArnoldi{} & $\bf {{73.34}_{\pm{0.34}}} $  & $\bf {{ 49.58}_{\pm{0.07}}} $   &  $ \bf{{92.14}_{\pm{0.46}}} $ & $ {{77.34}_{\pm{0.49}}} $   & $ \bf{{74.16}_{\pm{0.85}}} $  \\
    \hline\hline
    \end{tabular}%
    }
     \vspace{-0.15in}
\end{table}

\begin{figure*}[t!]
\centering
\includegraphics[scale = 0.55]{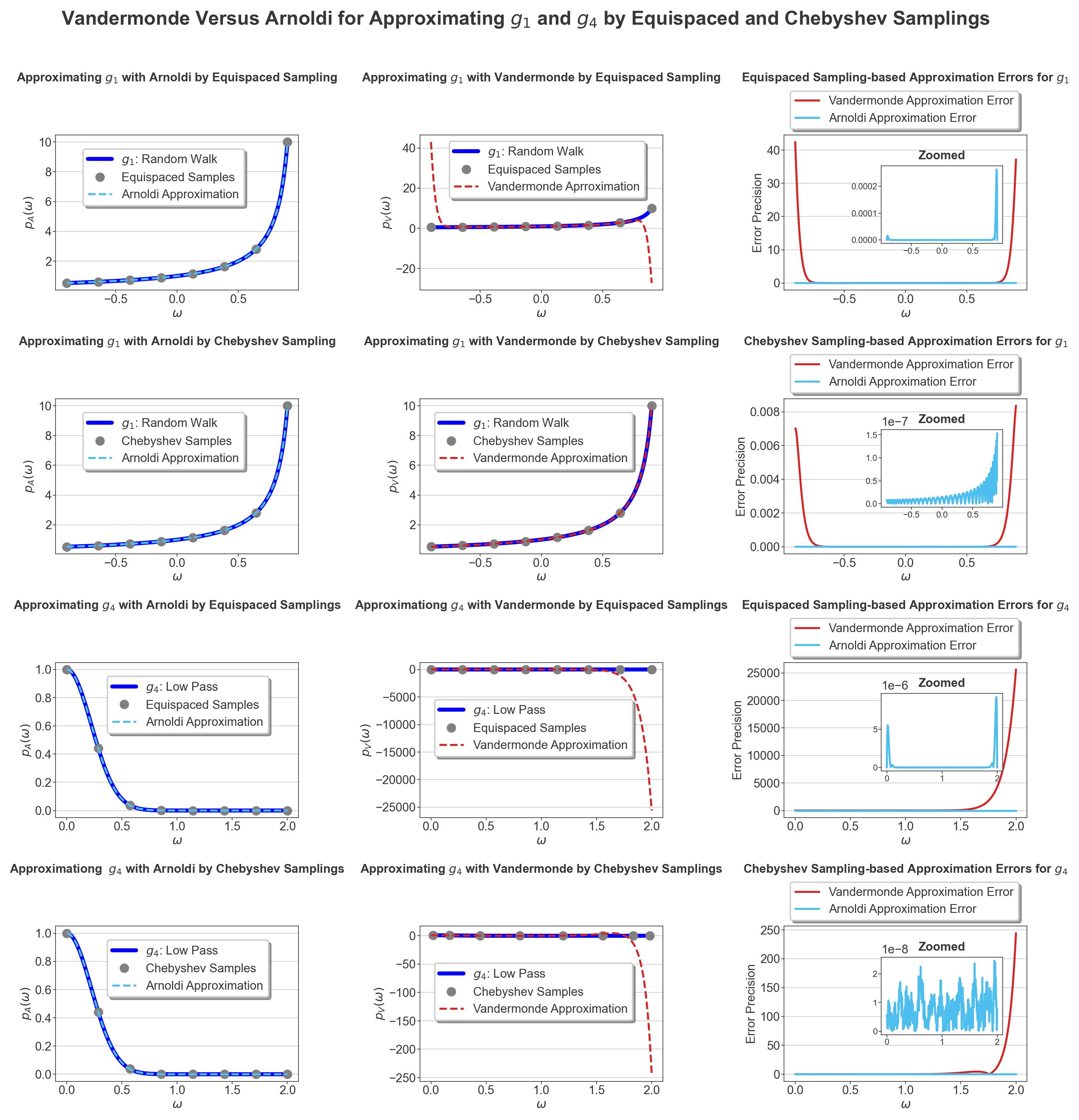}
\vspace{-0.1in}
\caption{{\bf{Approximating  ~\emph{Random Walk} and ~\emph{Low Pass}  filter functions using \emph{Equispace} vs. \emph{Chebyshev} sampling with direct solution to the Vandermonde system vs. Arnoldi orthonormalization.}} 
The first/ second columns show the approximation provided by Arnoldi decomposition/ direct solution, respectively, and the third column shows the error between approximations and actual filter functions. 
\label{fig:VanderAgenstArnoldiError}}
\vspace{-0.1in}
\end{figure*}

In this section, we systematically evaluate the performance of our proposed algorithms, \algoArnoldi{} and \algoGArnoldi{}
in utilizing explicit filters along with accurate polynomial approximation to improve node classification using Spectral GCNs. 
We first describe the datasets that contain five homophilic, five small-heterophilic , and five large heterophilic networks, as well as our experimental setup. 
Next, we compare the performance of \algoArnoldi{} and \algoGArnoldi{} against state-of-the-art algorithms GCN~\cite{kipf2016semi}, GAT~\cite{velivckovic2018graph}, APPNP~\cite{gasteiger2018predict}, ChebNet~\cite{xingjian2015convolutional},JKNet~\cite{xu2018representation}, GPR-GNN~\cite{chien2020adaptive}, BernNet~\cite{he2021bernnet}, and JacobiConv~\cite{wang2022powerful}) in both semi and fully supervised settings.
Finally, we assess the relationship between the numerical accuracy of the polynomial approximation and node classification performance of the resulting Spectral GCNs.

\subsection{Datasets and Experimental Setup}
We use fifteen real-world network datasets for our experiments. 
Five of these datasets are homophilic networks, which include three citation graph
datasets, Cora, CiteSeer and PubMed~\cite{yang2016revisiting},
and two Amazon co-purchase graphs, Computers and
Photo~\cite{shchur2018pitfalls}. 
The remaining five networks are heterophilic, which include Wikipedia graphs Chameleon and Squirrel~\cite{rozemberczki2021multi}, the Actor co-occurrence
graph, and the webpage graph Texas and Cornell from WebKB3~\cite{pei2019geom} as well as recently curated large heterophilic datasets, Roman-Empire, Amazon-Rating, Widesweeper, Tolokers, and Questions~\cite{platonov2022critical}. 
The statistics and homophily measures of these fifteen datasets are given in Table~\ref{tab:TableStats}. \emph{Adjusted Homophily} measures the association between being connected by an edge and having the same class 
label, corrected for class imbalance. 
\emph{Label Informativeness} measures the extent to which a neighbor's label provides information on the label of the node~\cite{platonov2024characterizing}.

In the following subsections, we report the results of three sets of experiments. 
For all settings, we perform cross-validation by hiding a fraction of labels (this fraction depends on the specific experiment as specified below) and assess performance using Accuracy (fraction of correctly classified nodes). 
{\tt{Widesweeper}}, {\tt{Tolokers}}, and {\tt{Questions}}  datasets have binary classes, thus we report area under ROC curve (AUROC) for these datasets.
We repeat all experiments five times and report the mean and standard deviation of accuracy.
 Hyper-parameter settings are presented in the Appendix.

\subsection{Comparison of Node Classification Performance of  \algoArnoldi{} and \algoGArnoldi{} to State-of-the-Art}
We compare our proposed algorithms, \algoArnoldi{} and \algoGArnoldi{}, against state-of-the-art Spectral GCNs on five homophilic and ten heterophilic real-world datasets for both semi and fully supervised node classification tasks. 
To assess semi-supervised node classification performance,
we use cross-validation by  randomly splitting the data into training/validation/test samples as (\%2.5 / \%2.5 /\%95).
To assess fully supervised node classification performance,
we use cross-validation by randomly splitting the data into training/validation/test samples as (\%60 / \%20 /\%20).
The results of these experiments are presented in Table~\ref{tab:SSHomo}
with the best-performing method(s) highlighted in bold. 
Our methods, particularly \algoGArnoldi{}, demonstrate significant improvements in the performance of Spectral-GCNs, as seen in these tables.  
We observe that the performance improvement is particulary pronounced on heterophilic networks, highlighting the importance of using explicit filters for complex 
machine learning tasks.
To provide insights into which filter functions deliver best performance on which classification tasks, we provide the details of the best-performing filter in Table~\ref{tab:HyperSemi} in the Appendix.

\subsection{Value Added by Numerical Accuracy}

The results shown in the previous set of experiments demonstrate 
that the use of explicit filter functions and their polynomial approximation significantly enhances node classification performance of Spectral GCNs.
To gain insights into the value added by the numerical accuracy in this process, we first investigate the effect of the linear system solver and the sampling technique used for the
polynomial approximation on the accuracy of the approximation.
Figure~\ref{fig:VanderAgenstArnoldiError} compares Arnoldi orhonormalization to direct solution of the Vandermonde system in approximating \emph{Random Walk} ($g_1$) and \emph{Low Pass} ($g_4$) filters using \emph{Equispaced} and \emph{Chebyshev} sampling. 
As seen in the figure, Arnoldi significantly reduces errors, particularly for complex filters. 
Furthermore, \emph{Chebyshev} sampling consistently outperforms \emph{Equispaced} sampling. 


\begin{figure}[t]
\centering
\includegraphics[scale = 0.7]{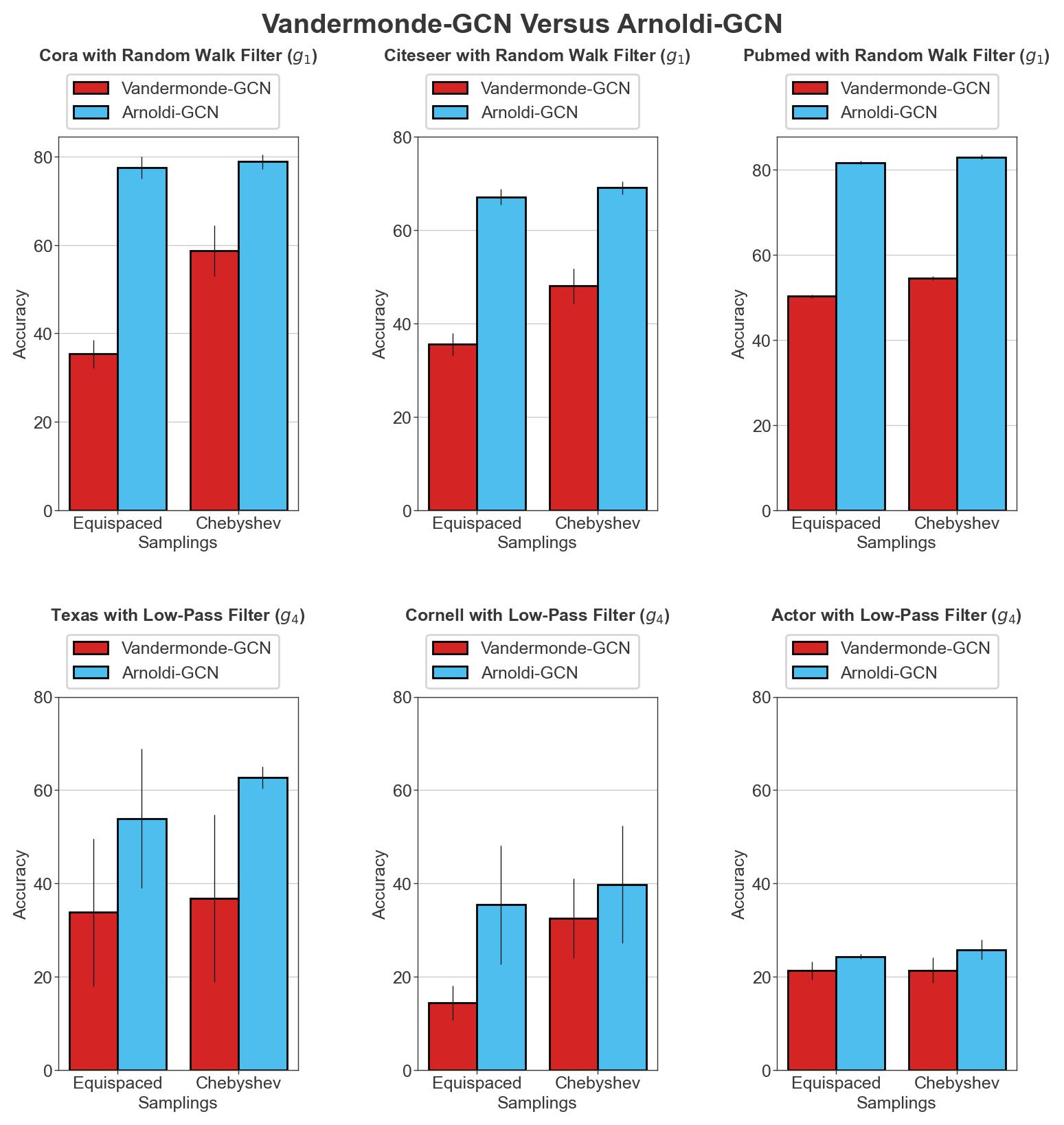}
\vspace{-0.1in}
\caption{ {\bf {Semi-supervised learning performance of Vandermonde and Arnoldi-generated approximations.}} 
The plots show the mean and standard deviation of node classification accuracy across five runs on six datasets. \label{fig:VanderAgenstArnoldi}}
\vspace{-0.25in}
\end{figure}

\subsection{From Approximation Quality to Classification Accuracy}
In the previous section, we observe that Arnoldi orthonormalization drastically improves the accuracy of polynomial approximations to filter functions.
We now assess how this numerical accuracy translates to the observed superior classification performance of the Spectral GCN.
For this purpose, we compare the node classification performance of the Spectral GCNs that utilize polynomials generated by direct solution of the Vandermonde system (\algoVandermonde{}) vs. 
Arnoldi orthonormalization (\algoArnoldi{}), using three homophilic and three heterophilic datasets. 
In these experiments, we use \emph{Random Walk} and \emph{Low Pass} filters and polynomial approximations computed using Equispaced and Chebyshev sampling.
As seen in Figure~\ref{fig:VanderAgenstArnoldi}, \algoArnoldi{} consistently and significantly outperforms  \algoVandermonde{}, demonstrating the importance of numerically stable polynomial approximations in training reliable GCN models. 
In addition, \emph{Chebyshev} sampling consistently outperforms \emph{Equispace} in semi-supervised node classification.
Integration of complex filters, e.g., \emph{Low-Pass}, delivers substantial improvements, especially for heterophilic datasets such as Texas and 
Cornell.

\subsection{Discussion}
Based on the comprehensive results presented in this section, we derive the following key insights:
\begin{itemize}
    \item \algoGArnoldi{} outperforms state-of-the-art algorithms on 12 out of 15 datasets in both semi-supervised and supervised node classification tasks.
    \item On 6 out of 10 heterophilic datasets, \algoGArnoldi{}'s performance improvement over other algorithms is highly significant (more than 10 standard deviations).
    \item On homophilic graphs, filters that assign more weight to local neighborhood of nodes deliver best performance. These include simple filters, which can be considered variations of random walks, as well as low-pass filters.
    \item On heterophilic graphs, self-depressed and neighbor-depressed filters compete with high-pass 
    filters. This is likely because they initialize coefficients to assign less weight to a node's self and immediate neighbors, allowing supervised learning to converge to desirable local optima.
    \item On the Questions dataset, band-rejection filters outperform all other filters and competing algorithms by more than 10 standard deviations. This indicates that the graph has specific topological properties that cannot be captured solely by homophily and label informativeness. Additionally, this observation demonstrates that the ability to design and apply complex filters enhances the effectiveness of spectral GCNs on graphs with intricate topological characteristics.
\end{itemize}

\section{Conclusion}
\label{seq:conclusion}

 The ill-conditioned nature of polynomial approximation to filter functions poses significant challenges for Spectral GCNs in defining suitable signal propagation.
 Our algorithm, \algoGArnoldi{}, overcomes this challenge by enabling numerically stable polynomial approximation of any filter function. 
 \algoGArnoldi{} excels in multi-class node classification across diverse datasets, showcasing its customizability with various filters and sampling methods available to choose from. 
 \algoGArnoldi{} marks a new direction in graph machine learning, enabling the explicit design and application of diverse filter functions in Spectral GCNs.
\section{Appendix}

\subsection{Proofs of Theorems}
In this subsection, we prove the theorems in the body of the paper. In the following proofs we assume $K = r$ and all the proofs are valid when $K \leq r$ since least square fitting is norm-wise backward stable~\cite{demmel1997applied}.

\begin{proof} [Proof of Theorem~\ref{theorem:appendix}]
Let $\krylov^{K}  = span \big\{ \qvec_{1}, \OmegaMatrix \qvec_{1}, \OmegaMatrix^2\qvec_{1}, \cdot, \cdot, \cdot, \OmegaMatrix^{K} \qvec_{1} \big\}$ be the Krylov space, where $\qvec_1 = \frac{\identityvec}{\norm{\identityvec}}$, and its column vectors be $\QMatrix$ generated by applying QR factorization on $\VMatrix$. Assume that $\OmegaMatrix^{K} \qvec_{1}$ is dropped from the Krylov space because QR factorization found an invariant subspace at point $K$. Let $\textbf{v}$ be an arbitrary  orthonormal vector to $\OmegaMatrix^{K} \qvec_{1}$ as $\ortvec$, i.e., $\ortvec$ is perpendicular to $\OmegaMatrix^{K} \qvec_{1}$. Since this $\ortvec$ is perpendicular to $\OmegaMatrix^{K} \qvec_{1}$ that is dropped from $\krylov^{K}$, we can express this vector as a linear combination of all previous vectors in $\krylov^{K}$. That is, for some $\beta_i \in (0,1)$, we have 
$ \ortvec = \beta_1 \qvec_1 + \beta_2 \OmegaMatrix\qvec_1 + \cdot \cdot \cdot + \beta_{K} \OmegaMatrix^{K} \qvec_{1}$. 
Clearly, we can state the last equation as a monic polynomial, i.e.,
$\ortvec = p_{K} (\OmegaMatrix) \qvec_1$.
Let $\mathcal{P}_j$ be the family of all monic polynomials, then:
\begin{align}
\norm{\ortvec} & = \min\limits_{p_j(\omega) \in \mathcal{P}_j}\norm{p_{K} (\OmegaMatrix) \qvec_1}&\\
     & \leq  \min\limits_{p_j(\omega) \in \mathcal{P}_j} \max\limits_{-\alpha \leq \omega \leq \alpha} |p_{K} (\omega)| \norm{\qvec_1} \\
     & = \min\limits_{p_j(\omega) \in \mathcal{P}_j} \max\limits_{-\alpha \leq \omega \leq \alpha} |p_{K} (\omega)| 
\end{align}

The last equation suggests that choose such a j $\omega$s from $[-\alpha, \alpha]$, $-\alpha \leq \omega_1, \omega_2, \cdot, \cdot, \cdot, \omega_j \leq \alpha$, such that following values are as small as possible:
\[
\max\limits_{-\alpha \leq \omega \leq \alpha} |p_{K} (\omega)| = \max\limits_{-\alpha \leq \omega \leq \alpha} |(\omega-\omega_1) (\omega-\omega_2) \cdot \cdot \cdot (\omega - \omega_j)|
\]

From Approximation Theory, we know that Chebyshev polynomial first kind's roots satisfy the above equation because from cosine definition of Chebyshev polynomial, the maximum value of $\mathcal{T}_{K} (\frac{\omega}{\alpha}) =1$. To use this property, let us define re-scaled Chebyshev polynomial as: 
\[
\mathcal{T}_{K} (\frac{\omega}{\alpha}) = \frac{1}{2^{K-1}} {\alpha}^K \omega^K + \cdot \cdot \cdot
\]
Since $\omega$s are chosen as the roots of the above polynomial:
\[
|(\omega-\omega_1) (\omega-\omega_2) \cdot \cdot \cdot (\omega - \omega_j)| = \frac{1}{2^{K-1}} {\alpha}^K |\mathcal{T}_{K} (\frac{\omega}{\alpha})|
\]
Then,
\[
 \min\limits_{p_j(\omega) \in \mathcal{P}_j} \max\limits_{-\alpha \leq \omega \leq \alpha} |(\omega-\omega_1) (\omega-\omega_2) \cdot \cdot \cdot (\omega - \omega_j)| \leq \frac{1}{2^{K-1}} {\alpha}^K
\]
Thus, 
$
\norm{\ortvec} \leq (1/2^{K-1}) {\alpha}^K 
$.
Using the definition of Frobenius norm  of a low rank matrix, and using the fact that limit of matrix goes zero after convergence, we have:

\[
\norm{\QMatrix^{\dagger}}_2 = \sigma_r \geq 2^{r-1} {\big(\frac{1}{\alpha}\big)}^r ~\mathrm{and}~ \norm{\QMatrix}_2 \geq 1
\]
Thus, we have:
\[
\kappa (\VMatrix) = \norm{\VMatrix}_2 \norm{\VMatrix^{\dagger}}_2 \geq \sigma_r \geq 2^{r-1} {\big(\frac{1}{\alpha}\big)}^r
\]

Now, we need to show that $\VMatrix$ is still ill-conditioned when $\omega_k \geq 1$ for $ k =1 , \cdot, \cdot, \cdot, r$ to establish the result for range $(\varepsilon, 2)$ (since we have shown result holds for (-1,1)). To prove this, let $\hat{\qvec}_1 = \OmegaMatrix^{r-1} \qvec_1$ and $\widehat{\OmegaMatrix} = \frac{1}{\OmegaMatrix}$. Now, let $\widehat{\VMatrix}$ be reversely ordered version of the original $\VMatrix$. Clearly, these matrices have the same condition number, i.e., $\kappa(\VMatrix) = \kappa (\widehat{\VMatrix})$. Furthermore: 
\[
\widehat{\VMatrix} = [\hat{\qvec_1},\widehat{\OmegaMatrix} \hat{\qvec_1}, \cdot, \cdot, \cdot, \widehat{\OmegaMatrix}^{K-1} \hat{\qvec_1}]
\]
Now, all elements of $\widehat{\OmegaMatrix}$ are in interval [-1,1], thus (from the previous establishment for (-1,1)) we have: 
\[
\kappa (\widehat{\VMatrix}) \geq 2^{r-2}
\]
Therefore, condition number of $\VMatrix$ or $\widehat{\VMatrix}$ , which is computed by QR factorization on $\VMatrix$ is exponentially unbounded and thus coefficients generated by this $\VMatrix$ are inaccurate in Equation~\ref{eq:weights}.
\end{proof}

\begin{proof}[Proof of Theorem~\ref{thm:theorem2}]
By using the definition of Vandermonde matrix, $\VMatrix = [\identityvec, \OmegaMatrix \identityvec, \cdot, \cdot, \cdot, \OmegaMatrix^{K-1} \identityvec]$, $\VMatrix^{(*)} = [\qvec_1, \OmegaMatrix \qvec_1, \cdot, \cdot, \cdot, \OmegaMatrix^{K-1} \qvec_1]$, where $\qvec_1 = \frac{\identityvec}{\norm{\identityvec}}$.
Let $\identityvec_{i} \in \mathbb{R}^{K-1}$ be i-th unit vector, i.e., only i-th entry is 1 and rest zero. We prove the theorem by induction. Let $i\leq K-1$, then for $i = 1$, since $\QMatrix = [\qvec_1, \qvec_2, \cdot, \cdot, \cdot,\qvec_{K} ]$ we have,
\begin{equation}
    \VMatrix^{(*)} \identityvec_1 = \QMatrix \RMatrix \identityvec_1 = \QMatrix \identityvec_1 = \qvec_1
    \label{eq:induction1}
\end{equation}

Now, assume $i = 2$, then we have,
\begin{equation}
    \VMatrix^{(*)} \identityvec_2 = \OmegaMatrix \qvec_1 = (\OmegaMatrix \QMatrix - \AlmostZero)\identityvec_1 = \QMatrix \TMatrix \identityvec_1 = \QMatrix \RMatrix \identityvec_2 
    \label{eq:induction2}
\end{equation}

Assuming that the claim holds for $i = K-1$, we have:

\vspace{-0.1in}
\begin{equation}
\begin{aligned}
    \VMatrix^{(*)} \identityvec_{K-1} &
    =~\OmegaMatrix^{K-1} \qvec_1
    =~(\OmegaMatrix^{K-1} \QMatrix - \AlmostZero)\identityvec_1 \quad\\
    & =~\QMatrix \TMatrix^{K-1} \identityvec_1
     =~\QMatrix \RMatrix \identityvec_{K-1} 
    \end{aligned}
    \label{eq:inductionjminus1}
\end{equation}

Now, we need to show Equation~\ref{eq:inductionjminus1} holds for $i = K$. First notice that from the assumption, we have 
$\OmegaMatrix^{K}\qvec_1 = \OmegaMatrix (\OmegaMatrix^{K-1} \qvec_1) = \OmegaMatrix (\QMatrix \TMatrix^{K-1} \identityvec_1  ) = (\OmegaMatrix \QMatrix) \TMatrix^{K-1} \identityvec_1 $,
so we can write:
\begin{equation}
\begin{aligned}
\OmegaMatrix^{K}\qvec_1 & =(\OmegaMatrix \QMatrix) \TMatrix^{K-1} \identityvec_1  = (\QMatrix \TMatrix - \AlmostZero)\TMatrix^{K-1} \identityvec_1 & \\
     & = \QMatrix \TMatrix^{K}\identityvec_1 - \AlmostZero \TMatrix^{K-1} \identityvec_1 
      = \QMatrix \TMatrix^{K}\identityvec_1 
     = \QMatrix \RMatrix \identityvec_{K+1} &
    \end{aligned}
\end{equation}
Finally, we have $\VMatrix^{(*)} \identityvec_{K+1} = \QMatrix \RMatrix \identityvec_{K+1}$. This, prove that $\norm{e} \times \QMatrix\RMatrix$ is QR of Vardermonde matrix. 
This completes the proof.

\end{proof}

\begin{proof} [Proof of Theorem~\ref{theorem:main}]
Assume $K = r$, then $\QMatrix$ is full rank and $\mathrm{rank}\ ({\QMatrix}) = r$ since $\omega$s are unique. When $K \leq r$ the proof still holds since least square fitting is norm-wise backward stable~\cite{demmel1997applied}. Ignoring the constant term, $r$, from the construction, $\QMatrix$ is orthonormal. Thus, we have $\QMatrix^T = \QMatrix^{-1}$ since $\QMatrix \QMatrix^T = \IMatrix$. Therefore, in theory, $\kappa (\QMatrix) = 1$. However, in practice, due to machine round-off error, this number might change, especially, when Ritz values, i.e., eigenvalues of $\TMatrix$, converge to eigenvalues of $\OmegaMatrix$. To illustrate this point, let $\varepsilon$ be machine epsilon and let us represent floating point by ${fl}$, i.e., stored version of $\QMatrix$ in the machine is ${{fl}} (\QMatrix)$.
We want to measure the difference between $\QMatrix \QMatrix^{-1}$ and $\IMatrix$. That is, we want to compute:
\[{ {fl}} (\QMatrix \QMatrix^{-1} - \IMatrix)\]
Using Golub and Van~\cite{golub2013matrix} (2.4.15) and (2.4.18), we have:
\[{ {fl}} (\QMatrix \QMatrix^{-1} - \IMatrix) = (\QMatrix \QMatrix^{-1} - \IMatrix) + \EMatrix_{1}~~~~~~~~~ \EMatrix_{1} \leq \varepsilon \norm{\QMatrix \QMatrix^{-1} - \IMatrix)} \]
and
\[{ {fl}} (\QMatrix \QMatrix^{-1}) = \QMatrix \QMatrix^{-1} + \EMatrix_{1}~~~~~~~~~~~~~~~ \EMatrix_{2} \leq \varepsilon \times r \norm{\QMatrix \QMatrix^{-1} } + \mathcal{O}(\varepsilon^2)\]
By using the above floating point analysis for matrix multiplication and addition,  $\kappa (\QMatrix)$ perturbs from 1 around 0.01 due to $\mathcal{O}(\varepsilon^2)$ error term. Thus, we have
\[\kappa (\QMatrix) \approx 1.01\]
\end{proof}


 \begin{table}[t!]
    \centering
     \caption{Hyperparameters used for \algoArnoldi{} and \algoGArnoldi{} for node classification.}
    \label{tab:HyperSemi}
    \resizebox{\columnwidth}{!}{
    \begin{tabular}{zyyxx}
    \hline\hline
        {\bf Datasets } & {\bf Samplings} & {\bf Filter}  & \makecell{\bf Propagation layer \\
\bf learning rate} & \makecell{\bf Propagation layer \\
\bf droupout rate} \\
        \hline
        \multicolumn{5}{c}{{\em{Semi-Supervised classification}}}\\
        \hline
        Cora & Jacobi  & $g_0$ (Scaled RW)  &  0.001 & 0.7\\
        
        Citeseer& Jacobi  & $g_2$ (Self-Depressed RW)&  0.002 & 0.6  \\
        
        Pubmed & Chebyshev & $g_3$ (Neighbor-Depressed RW) &  0.001 & 0.1 \\
        
        Photos & Chebyshev  & $g_0$ (Scaled RW) &  0.002 & 0.1\\
        Computers & Equispace  & $g_2$  (Self-Depressed RW) &  0.002 & 0.1  \\
        Texas & Legendre  & $g_4 $ (High-Pass) &  0.01& 0.2 \\
        
        Cornell & Jacobi  & $g_7 $ (Band-Rejection) &  0.05 & 0.9  \\
        
        Actor & Jacobi  & $g_7 $ (High-Pass) &  0.001 & 0.5\\
        
        Chameleon & Legendre  & $g_1 (RW)$  &  0.05 &  0.8 \\
        Squirrel & Equispace  & $g_0$ (Self-Depressed RW)  &  0.01 &  0.2\\
         roman-empire&  Chebyshev & $g_5$ (High-Pass)  &  0.01 & 0.2\\
        
        amazon-ratings& Legendre& $g_5$ (High-Pass)&  0.01 & 0.3 \\
        
        minesweeper & Jacobi & $g_3$ (Neigbor Depressed-RW) &  0.01 & 0.1 \\
        
        tolokers & Jacobi  & $g_5$ (High-Pass) &  0.002 &  0.5 \\
        questions & Chebyshev & $g_7$  (Band-Rejection) &  0.002 &  0.1 \\
    \hline
    
   

     \hline
        \multicolumn{5}{c}{{\em{Fuly-Supervised classification}}}\\
    \hline
        {\bf Datasets } & {\bf Samplings} & {\bf Filter} & \makecell{\bf Propagation layer \\
\bf learning rate}& \makecell{\bf Propagation layer \\
\bf droupout rate} \\
        \hline
        Cora & Jacobi  & $g_3$ (Neighbor-Depressed RW) &  0.05 & 0.4 \\
        
        Citeseer& Chebyshev  & $g_3$ (Neighbor-Depressed RW) &  0.05 & 0.4 \\
        
        Pubmed & Equispace & $g_5$ (Low-Pass) &  0.001 & 0.1 \\
        
        Photos & Equispace  & $g_1$ (RW)&  0.002 & 0.5 \\
        Computers & Chebyshev  & $g_5$ (Low-Pass)&  0.002 &  0.5  \\
        Texas & Legendre  & $g_2 $ (Self-Depressed RW) &  0.05 &  0.4 \\
        
        Cornell & Chebyshev  & $g_2 $ (Self-Depressed RW)&  0.05 & 0.4\\
        
        Actor & Equispace  & $g_2 $ (Self-Depressed RW) &  0.05 & 0.4 \\
        
        Chameleon & Legendre  & $g_0 $  (Scaled RW)&  0.05 & 0.5  \\
        Squirrel & Jacobi  & $g_2 $ (Self-Depressed RW)&  0.05& 0.3  \\
        roman-empire&  Chebyshev & $g_5$ (High-Pass)  &  0.001 &  0.2  \\
        
        amazon-ratings& Chebyshev& $g_5$ (High-Pass)&  0.002 & 0.1  \\
        
        minesweeper & Jacobi & $g_3$ (Neighbor Depressed-RW) &  0.01 & 0.1 \\
        
        tolokers & Equispace  & $g_2$ Neighbor Depressed-RW) &  0.002& 0.5  \\
        questions & Chebyshev & $g_7$  (Band-Rejection) &  0.002 & 0.1  \\
    \hline\hline
    \end{tabular}}

     \vspace{-0.2in}
\end{table}

\subsection{Hyper-parameters}

In the body of the paper, we perform three sets of experiments in which the following hyperparameters are used:

\textbf{Experiment 1.} For these experiments, we use two hyper-parameters that are -- the number of samples, $r$, used to approximate the filter functions, and dimension of $\QMatrix$, $K$. We set these two parameters to $r, K = 40$.

\textbf{Experiment 2.} In this experiment, we use \algoArnoldi{} in Equation~\ref{eq:ArnoldiSpectralGCN} in which coefficients are computed by Vandermonde~\ref{eq:Vandermonde} and Arnoldi in Algorithm~\ref{algo:Arnoldi}.
For these experiments, we limit ourselves to semi-supervised node classification task and  consider the random split into training/validation/test samples, which is a sparse splitting (\%2.5 / \%2.5 /\%95). For fair comparison, we use the same filter functions for both Vandermonde and Arnoldi. Namely, for homophilic datasets, we use \emph{RW filter function}, i.e., $g_{RW} (\omega) =\frac{1}{1-\omega}$, where $\omega \in [-0.9, 0.9]$, and for heterophilic datasets, we use \emph{low pass filter function}, i.e.,  $g_{LP} (\omega) =exp\{-10\omega\}$, where $\omega \in [\varepsilon, 2]$, $\varepsilon$ is $10^{-5}$. Furthermore,  we choose number of samples and dimension of $\QMatrix$ as $r,K = 10$ and use a 2-layer (MLP) with 64 hidden units for the NN component. For other hyperparameters, we fix \{\emph{learning rate} / \emph{weight decay} / \emph{dropout}\} to \{0.002 /0.0005/0.5\}, respectively. 
    
\textbf{Experiment 3.}
In these experiments, we use eight representative filter functions in Equations~\ref{eq:g0}-~\ref{eq:g7} and approximate them by using four representative polynomial samples (Equispaced, Chebyshev, Legendre, and Jacobi roots) in Equations~\ref{eq:p0}-~\ref{eq:p3}. We use bound [-0.9, 0.9] for simple filter functions ($g_0 - g_3$) and $[10^{-5}, 2]$ for complex filter functions ($g_4 - g_7$). 
For semi-supervised node classification tasks we again randomly split the data into training/validation/test samples as (\%2.5 / \%2.5 /\%95); and for full-supervised node classification tasks, we randomly split the data into training/validation/test samples as (\%60 / \%20 /\%20). Again, we choose number of samples and dimension of $\QMatrix$ as $r,K = 10$ and a 2-layer MLP with 64 hidden units are used, with weight decay = 0.0005. Learning rate is tuned within {0.001, 0.002, 0.01, 0.05} and dropout within {0.1 to 0.9}. The best combinations are in the tables.

For fair comparison, we use the best hyperparameters from state-of-the-art Spectral GCN algorithms (GCN, GAT, APPNP, ChebNet, JKNet, GPR-GNN, BernNet,  JacobiConv). For random walk length in APPNP, ChebNet, GPR-GNN, BernNet, and JacobiConv we use 
K =10. We generate polynomials using Python's \emph{poly1d}.
The hyperparameter combinations that are used to generate the results in Table~\ref{tab:SSHomo} are shown in Table~\ref{tab:HyperSemi}.

\bibliographystyle{unsrt}  
\bibliography{references}  

\end{document}